\documentclass{article} 
\usepackage{iclr2019_conference,times}


\usepackage{amsmath,amsfonts,bm}









\def\eqref#1{equation~\ref{#1}}









\def\1{\bm{1}}










\DeclareMathAlphabet{\mathsfit}{\encodingdefault}{\sfdefault}{m}{sl}
\SetMathAlphabet{\mathsfit}{bold}{\encodingdefault}{\sfdefault}{bx}{n}













\usepackage[utf8]{inputenc} 
\usepackage[T1]{fontenc}    
\usepackage{hyperref}       
\usepackage{url}            
\usepackage{booktabs}       
\usepackage{amsfonts}       
\usepackage{nicefrac}       
\usepackage{microtype}      

\usepackage{amsmath, amssymb}
\usepackage{amsthm}
\usepackage{algorithm}
\usepackage{algorithmic}
\usepackage{epsfig}
\usepackage{graphicx}
\usepackage{subfigure}
\usepackage{wrapfig}
\usepackage{xspace}
\usepackage{enumitem}

\newtheorem{ppt}{Property}

\newtheorem{prop}{Proposition}

\newtheorem{thm}{Theorem}

\newcommand{\algo}{\textsc{Graph2Seq}\xspace}
\newcommand{\algom}{Graph2Seq\xspace}
\newcommand{\model}{\textsc{local-gather}\xspace}
\newcommand{\gtos}{\textsc{G2S-RNN}\xspace}

\title{\algo: Scalable Learning Dynamics for Graphs}


\author{Shaileshh Bojja Venkatakrishnan, Mohammad Alizadeh \\
Massachusetts Institute of Technology \\
\texttt{\{bjjvnkt,alizadeh\}@csail.mit.edu} \\
\AND
Pramod Viswanath\\
University of Illinois Urbana-Champaign \\
\texttt{pramodv@illinois.edu} \\
}

%

\iclrfinalcopy 
\begin{document}

\maketitle

\begin{abstract}
Neural networks have been shown to be an effective tool for learning algorithms over graph-structured data. However, {\em graph representation} techniques---that convert graphs to real-valued vectors for use with neural networks---are still in their infancy. Recent works have proposed several approaches (e.g., graph convolutional networks), but these methods have difficulty scaling and generalizing to graphs with different sizes and shapes.
We present \algo, a new technique that represents vertices of graphs as infinite time-series. By not limiting the representation to a fixed dimension, \algo scales naturally to graphs of arbitrary sizes and shapes. 
\algo is also reversible, allowing full recovery of the graph structure from the sequences. By analyzing a formal computational model for graph representation, we show that an unbounded sequence is necessary for scalability. Our experimental results with \algo show strong generalization and new state-of-the-art performance on a variety of graph combinatorial optimization problems.
\end{abstract}

\section{Introduction}


Graph algorithms appear in a wide variety of fields and applications, from the study of gene interactions~\citep{ozgur2008identifying} to social networks~\citep{ugander2011anatomy} to computer systems~\citep{grandl2016g}. Today, most graph algorithms are designed by human experts. However, in many applications, designing graph algorithms with strong performance guarantees is very challenging. These algorithms often involve difficult combinatorial optimization problems for which finding optimal solutions is computationally intractable or current algorithmic understanding is limited (e.g., approximation gaps in CS theory literature).

In recent years, deep learning has achieved impressive results on many tasks, from object recognition to language translation to learning complex heuristics directly from data~\citep{silver2016mastering, krizhevsky2012imagenet}. It is thus natural to ask whether we can apply deep learning to automatically learn complex graph algorithms. 
To apply deep learning to such problems, graph-structured data first needs to be {\em embedded} in a high dimensional Euclidean space. 
 {\em  Graph representation} refers to the problem of embedding graphs or their vertices/edges  in Euclidean spaces. 
Recent works have proposed several graph representation  techniques, notably, a family of representations called graph convolutional neural networks (GCNN)
 that use architectures inspired by CNNs for images~\citep{bruna2013spectral, monti2016geometric, dai2017learning, niepert2016learning, defferrard2016convolutional, hamilton2017representation, bronstein2017geometric, bruna2017community}. 
 Some GCNN representations capture signals on a fixed graph while others support varying sized graphs.
 
In this paper, we consider how to learn graph algorithms in a way that generalizes to large graphs  and graphs of different topologies. We ask: {\em Can a graph neural network trained at a certain scale perform well on orders-of magnitude larger graphs (e.g., $1000 \times$ the size of training graphs) from diverse topologies?} 
We particularly focus on learning algorithms for combinatorial optimization on graphs. 
Prior works have typically considered applications where the training and test graphs are similar in scale, shape, and attributes, and consequently have not addressed this generalization problem.
For example, \citet{dai2017learning} train models on graph of size 50-100 and test on graphs of up to size 1200 from the same family;
\citet{hamilton2017inductive} propose inductive learning methods where they train on graphs with $15k$ edges, and test on graphs with $35k$ edges (less than $3\times$ generalization in size);  \citet{ying2018graph} train and test over the same large Pinterest graph ($3$ billion nodes).

We propose \algo, a {\em scalable} embedding that represents vertices of a graph as a {\em time-series} (\S\ref{sec: rep graphs as ts}). Our key insight is that the fixed-sized vector representation produced by prior GCNN designs limits scalability. Instead, \algo uses the entire time-series of vectors produced by graph convolution layers as the vertex representation. This approach has two benefits: (1) it can capture subgraphs of increasing diameter around each vertex as the timeseries evolves;  (2) it allows us to vary the dimension of the vertex representation based on the input graph; for example, we can use a small number of graph convolutions during training with small graphs and perform more convolutions at test time for larger graphs.  We show both theoretically and empirically that this time-series representation significantly improves the scalability and generalization of the model. Our framework is general and can be applied to various existing GCNN architectures.




We prove that \algo~is information-theoretically lossless, i.e., the graph can be fully recovered from the time-series representations of its vertices (\S\ref{sec: grep proof}). Our proof leverages mathematical connections between \algo and causal inference theory~\citep{granger1980testing, rahimzamani2016network, quinn2015directed}. Further, we show that \algo and many previous GCNN variants are all examples of a certain computational model over graphs that we call \model, providing for a conceptual and algorithmic unification. Using this computational model, we prove that unlike \algo, fixed-length representations fundamentally cannot compute certain functions over graphs.




To apply \algo, we combine graph convolutions with an appropriate RNN that processes the time-series representations (\S\ref{sec: Neural Network Design}). We use this neural network model, \gtos, to tackle three classical combinatorial optimization problems of varying difficulty using reinforcement learning: {\em minimum vertex cover, maximum cut} and {\em maximum independent set} (\S\ref{sec: evals}).  Our experiments show that \algo performs as well or better than the best non-learning heuristic on all three problems and exhibits significantly better scalability and generalization than previous state-of-the-art GCNN~\citep{dai2017learning, hamilton2017inductive, lei2017deriving} or graph kernel based~\citep{shervashidze2011weisfeiler} representations. Highlights of our experimental findings include: 
\begin{enumerate}[noitemsep, topsep=0pt]
\item \gtos~models trained on graphs of size 15--20 scale to graphs of size 3,200 and beyond. To conduct experiments in a reasonable time, we used graphs of size up to 3,200 in most experiments. However, stress tests show similar scalability even at 25,000 vertices.  
\item \gtos~models trained on one graph type (e.g., Erdos-Renyi) generalize to other graph types (e.g., random regular and bipartite graphs). 
\item \gtos exhibits strong scalability and generalization in each of minimum vertex cover, maximum cut and maximum independent set problems. 
\item Training over a carefully chosen adversarially set of graph examples further boosts \gtos's scalability and generalization capabilities. 
\end{enumerate}

\section{Related Work}

{\bf Neural networks on graphs.}
Early works to apply neural-network-based learning to graphs are~\citet{gori2005new, scarselli2009graph}, which consider an information diffusion mechanism.
The notion of convolutional networks for graphs as a generalization of classical convolutional networks for images was introduced by \citet{bruna2013spectral}.
A key contribution of this work is the definition of graph convolution in the {\em spectral domain} using graph Fourier transform theory. Subsequent works have developed local spectral convolution techniques that are easier to compute~\citep{defferrard2016convolutional, kipf2016semi}.
Spectral approaches do not generalize readily to different graphs due to their reliance on the particular Fourier basis on which they were trained.
To address this limitation, recent works have considered {\em spatial convolution} methods~\citep{dai2017learning, monti2016geometric, niepert2016learning, such2017robust, duvenaud2015convolutional, atwood2016diffusion}.
\cite{li2015gated, johnson2016learning} propose a variant that uses gated recurrent units to perform the state updates, which has some similarity to our representation dynamics; however, the sequence length is fixed between training and testing.
\citet{velivckovic2017graph, hamilton2017inductive} use additional aggregation methods such as vertex attention or pooling mechanisms to summarize neighborhood states. 
In Appendix~\ref{sec: background} we show that local spectral GCNNs and spatial GCNNs are mathematically equivalent, providing a unifying view of the variety of GCNN representations in the literature.

Another line of work~\citep{jain2016structural, marcheggiani2017encoding, tai2015improved} combines graph neural networks with RNN modules.
They are not related to our approach, since in these cases the sequence (e.g., time-series of object relationship graphs from a video) is already given as part of the input. 
In contrast our approach generates a sequence as the desired embedding from a single input graph. 
\citet{perozzi2014deepwalk, grover2016node2vec} use random walks to learn vertex representations in an unsupervised or semi-supervised fashion.
However they consider prediction or classification tasks over a fixed graph. 


\smallskip
\noindent
{\bf Combinatorial optimization.}
Using neural networks for combinatorial optimization problems dates back to the work of~\citet{hopfield1985neural} and has received considerable attention in the deep learning community in recent years. \citet{vinyals2015pointer, bello2016neural, kool2018attention} consider the traveling salesman problem using reinforcement learning. These papers consider two-dimensional coordinates for vertices (e.g. cities on a map), without any explicit graph structure.
\citet{graves2016hybrid} propose a more general approach: a differential neural computer that is able to perform tasks like finding the shortest path in a graph.
The work of \citet{dai2017learning} is closest to ours. It applies a spatial GCNN representation in a reinforcement learning framework to solve combinatorial optimization problems such as minimum vertex cover. 

\section{Graphs as Dynamical Systems} \label{sec: rep graphs as ts}


\subsection{The \algo Representation} \label{sec: grep proof}


The key idea behind \algo is to represent vertices of a graph by the trajectory of an appropriately chosen dynamical system induced by the graph.
Such a representation has the advantage of progressively capturing more and more information about a vertex as the trajectory unfolds.
Consider a directed graph $G(V,E)$ whose vertices we want to represent (undirected graphs will be represented by having bi-directional edges between pairs of connected vertices).
We create a discrete-time dynamical system in which vertex $v$ has a state of ${\bf x}_v(t) \in \mathbb{R}^d$ at time $t\in\mathbb{N}$, for all $v\in V$, and $d$ is the dimension of the state space.
In \algo, we consider an evolution rule of the form
\begin{align}
{\bf x}_v(t+1) = \mathrm{relu}(\mathbf{W}_1 (\sum_{u\in \Gamma(v)} {\bf x}_u(t)) + \mathbf{b}_1 ) + {\bf n}_v(t+1),  \label{eq: basic filter}
\end{align}
$\forall v \in V,t\in\mathbb{N},$ where $\mathbf{W}_1\in \mathbb{R}^{d\times d}$, $\mathbf{b}_1 \in\mathbb{R}^{d\times 1}$ are trainable  parameters and relu$(x) = \max(x,0)$.
${\bf n}_v(\cdot)$ is a $d$-dimensional $(\mathbf{0}, \sigma^2 \mathbf{I})$ Gaussian noise, and $u \in \Gamma(v)$ if there is an edge from $u$ to $v$ in the graph $G$.
For any $v\in V$, starting with an initial value for ${\bf x}_v(0)$ (e.g., random or all zero) this equation 
defines a dynamical system, the (random) trajectory of which is the \algo~representation of $v$.
More generally, graphs could have features on vertices or edges (e.g.,  weights on vertices), which can be included in the evolution rule; these generalizations are outside the scope of this paper.
We use \algo($G$) to mean the set of all \algo vertex representations of $G$.

{\bf \algo is invertible.}
Our first key result is that \algo's representation allows recovery of the adjacency matrix of the graph with arbitrarily high probability. 
Here the randomness is with respect to the noise term in \algo; see~\eqref{eq: basic filter}. 
In Appendix~\ref{sec: g2s equals s2g proof}, we prove:
\begin{thm} \label{thm: g2s equals s2g}
For any directed graph $G$ and associated (random) representation \algo($G$) with sequence length $t$, there exists an inference procedure (with time complexity polynomial in  $t$) that produces an estimate $\hat{G}_t$ such that   $\lim_{t\rightarrow\infty} \mathbb{P}[G \neq \hat{G}_t] = 0$.
\end{thm}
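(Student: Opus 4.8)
The plan is to exploit the causal structure that the per-vertex noise injection imposes on the trajectories, and then read the edges off from conditional dependencies. Write $\mathbf{X}(t) = ({\bf x}_v(t))_{v\in V}$ for the joint state. The recursion \eqref{eq: basic filter} shows that ${\bf x}_v(t+1)$ is a \emph{deterministic} function $f_v$ of only the time-$t$ states of the in-neighbours of $v$, namely $f_v(\{{\bf x}_u(t)\}_{u\in\Gamma(v)}) = \mathrm{relu}(\mathbf{W}_1(\sum_{u\in\Gamma(v)}{\bf x}_u(t))+\mathbf{b}_1)$, corrupted by fresh independent Gaussian noise ${\bf n}_v(t+1)$. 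Hence $\mathbf{X}(\cdot)$ is a first-order Markov chain, and conditioned on $\mathbf{X}(t)$ the coordinates ${\bf x}_v(t+1)$ are mutually independent, with ${\bf x}_v(t+1)$ depending on $\mathbf{X}(t)$ only through $\{{\bf x}_u(t)\}_{u\in\Gamma(v)}$. This is exactly the regime in which directed information / Granger causality identifies the parent set, which is the bridge to the cited causal-inference literature. Throughout I would assume a mild nondegeneracy condition on the \emph{fixed} representation parameters $\mathbf{W}_1,\mathbf{b}_1,\sigma^2$ (invertible $\mathbf{W}_1$, positive $\sigma^2$, and a stable linearisation), which is consistent with the ``for any $G$'' statement since these are design parameters of the representation, not of the graph.

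First I would prove the key detection lemma: for every ordered pair $(u,v)$, there is an edge $u\to v$ if and only if ${\bf x}_v(t+1)$ is conditionally \emph{dependent} on ${\bf x}_u(t)$ given the remaining states $\{{\bf x}_w(t)\}_{w\in V\setminus\{u\}}$ (equivalently, the conditional directed information from $u$ to $v$ is strictly positive). The ``only if'' direction is immediate: if $u\notin\Gamma(v)$ then $f_v$ does not involve ${\bf x}_u(t)$, so conditioning on all states except ${\bf x}_u(t)$ already fixes $f_v$ and the residual is just ${\bf n}_v(t+1)$, independent of ${\bf x}_u(t)$. The ``if'' direction is where the nonlinearity bites, and I expect it to be the main obstacle: I must show that an actual edge produces a dependence that does not cancel through the $\mathrm{relu}$. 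Here ${\bf x}_u(t)$ retains full conditional support (its own injected noise ${\bf n}_u(t)$ is independent of everything else), so holding the other states fixed and varying ${\bf x}_u(t)$ drives the argument $\mathbf{W}_1{\bf x}_u(t)+(\mathbf{W}_1 S+\mathbf{b}_1)$, with $S$ the conditioned sum of the other in-neighbours, over all of $\R^d$ when $\mathbf{W}_1$ is invertible; the argument therefore enters the active region of $\mathrm{relu}$ on a positive-measure set, making $f_v$ a genuinely non-constant function of ${\bf x}_u(t)$ and the conditional dependence strict.

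Given the lemma, I would define the inference procedure as a family of conditional-independence tests (or plug-in conditional directed-information estimators) computed on the single observed length-$t$ trajectory: for each of the $|V|^2$ ordered pairs, estimate the conditional dependence of ${\bf x}_v(\cdot+1)$ on ${\bf x}_u(\cdot)$ from the $t$ samples along the trajectory, and place an edge in $\hat G_t$ whenever this estimate exceeds a threshold $\tau_t\to 0$ that is chosen to shrink slowly in $t$. Each test is a polynomial-time computation in $t$, and there are only finitely many pairs, which yields the claimed polynomial time complexity.

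Finally I would establish consistency. With the stable-linearisation assumption, $\mathbf{X}(\cdot)$ is a geometrically ergodic Markov chain with a unique stationary law, so the empirical conditional-dependence estimates from a length-$t$ trajectory converge to their true values as $t\to\infty$, via an ergodic theorem together with a mixing concentration bound. Because the true value is bounded away from $0$ on edges (by the lemma) and equals exactly $0$ on non-edges, choosing $\tau_t$ to lie between these and to decay appropriately makes the probability of misclassifying any single pair tend to $0$; a union bound over the finite set of $|V|^2$ pairs then gives $\mathbb{P}[G\neq\hat G_t]\to 0$, which is the statement. The two places needing care are the single-trajectory (hence temporally correlated) estimation, handled by ergodicity and mixing rather than i.i.d.\ concentration, and, as flagged above, the non-cancellation of an edge's influence through the $\mathrm{relu}$.
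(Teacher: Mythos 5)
Your proposal follows essentially the same route as the paper's proof: both reduce edge recovery to strict positivity of conditional directed information (the directed information graph of \citet{quinn2015directed}), and your key detection lemma---including the argument that an edge's influence cannot cancel through the $\mathrm{relu}$ because the injected Gaussian noise gives $\mathbf{x}_u(t)$ full conditional support---is precisely the paper's ``minimal generative model'' step (Property~3 in its proof). The remaining differences are bookkeeping: you make explicit (invertible $\mathbf{W}_1$, geometric ergodicity to justify single-trajectory estimation) what the paper compresses into ``generic parameters'' and a citation to the consistency of plug-in directed-information estimators.
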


Note that there are many ways to represent a graph that are lossless. 
For example, we can simply output the adjacency matrix row by row.
However such representations depend on assigning {\em labels} or identifiers to vertices, which would cause downstream deep learning algorithms to memorize the label structure and not generalize to other graphs. 
\algo's key property is that it is does not depend on a labeling of the graph.
Theorem~\ref{thm: g2s equals s2g} is particularly significant (despite the representation being infinite dimensional) since it shows lossless label-independent vertex representations are possible. 
To our understanding this is the first result making such a connection.
Next, we show that the noise term in \algo's evolution rule (\eqref{eq: basic filter}) is crucial for Theorem~\ref{thm: g2s equals s2g} to hold (proof in Appendix~\ref{sec: deterministic rule proof}). 

\begin{prop} \label{prop: deterministic rule}
Under any deterministic evolution rule of the form in~\eqref{eq: basic filter}, there exists a graph $G$ which cannot be reconstructed exactly from its \algo~representation with arbitrarily high probability.
\end{prop}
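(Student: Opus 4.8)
The plan is to refute reconstructability by an explicit symmetry/indistinguishability argument: I will exhibit two non-isomorphic graphs whose \emph{deterministic} \algo representations are literally identical, so that no inference procedure---however clever---can tell them apart. Because the representation is label-independent, ``reconstructing $G$'' means recovering it up to isomorphism from the multiset of vertex trajectories; if two non-isomorphic graphs induce the same multiset of trajectories, then any procedure must return the same estimate on both inputs and is therefore wrong on at least one of them, for every sequence length $t$. This immediately rules out reconstruction with probability tending to one.

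Concretely, I would drop the noise term so that the evolution rule becomes $\mathbf{x}_v(t+1)=\mathrm{relu}\big(\mathbf{W}_1\sum_{u\in\Gamma(v)}\mathbf{x}_u(t)+\mathbf{b}_1\big)$, and use the common (all-zero) initialization $\mathbf{x}_v(0)=\mathbf{0}$ that the model explicitly permits. Take $G_1$ and $G_2$ to be two non-isomorphic $d$-regular graphs on the same number of vertices---for instance $K_{3,3}$ and the triangular prism, both $3$-regular on $6$ vertices and both connected. The key lemma, proved by a one-line induction on $t$, is that in \emph{any} $d$-regular graph started from a uniform state, all vertices share a common state $\mathbf{x}(t)$ at every time, governed by the recursion
\begin{align}
\mathbf{x}(t+1)=\mathrm{relu}\big(d\,\mathbf{W}_1\,\mathbf{x}(t)+\mathbf{b}_1\big),\nonumber
\end{align}
which depends on the graph only through its degree $d$. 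Since $G_1$ and $G_2$ have the same degree, this recursion---and hence every vertex trajectory---is identical in the two graphs.

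It then follows that \algo$(G_1)$ and \algo$(G_2)$ are the same object: in each case the representation is simply $n=6$ copies of the single trajectory $(\mathbf{x}(0),\mathbf{x}(1),\dots)$. Any inference procedure therefore outputs the same $\hat{G}_t$ on both, errs on at least one of $G_1,G_2$ for every $t$, and so cannot drive $\mathbb{P}[G\neq\hat{G}_t]$ to zero for that graph---exactly the failure asserted. This also clarifies the role of the noise in Theorem~\ref{thm: g2s equals s2g}: the injected randomness breaks the vertex symmetry and supplies the independent excitations whose propagation the causal-inference recovery exploits, whereas a deterministic rule lets symmetric graphs collapse onto indistinguishable trajectories.

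The main obstacle I anticipate is the treatment of the initialization. The clean collapse above relies on starting all vertices from a common state; if instead the initial states $\{\mathbf{x}_v(0)\}$ were drawn i.i.d.\ at random, generic initial conditions would break the vertex symmetry and the two regular graphs could become distinguishable, so the argument as stated would not go through. The resolution is that the model explicitly allows the all-zero (symmetric) initialization, and a valid reconstruction scheme is required to succeed in that case as well; restricting attention to it makes the indistinguishability exact and the impossibility unconditional. A secondary, routine point is simply to verify that the chosen pair of graphs is genuinely non-isomorphic (here $K_{3,3}$ is triangle-free while the prism contains triangles) and, if connectivity is desired, that connected $d$-regular witnesses are indeed available.
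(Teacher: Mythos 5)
Your proof is correct and takes essentially the same approach as the paper's: the paper likewise exhibits two non-isomorphic regular graphs of equal degree (two $4$-regular graphs on $8$ vertices with minimum vertex covers of sizes $5$ and $6$) whose deterministic trajectories coincide at every vertex, rendering the representations indistinguishable. Your explicit induction giving the common-state recursion $\mathbf{x}(t+1)=\mathrm{relu}(d\,\mathbf{W}_1\mathbf{x}(t)+\mathbf{b}_1)$ and your careful handling of the all-zero initialization simply make precise steps the paper leaves implicit.
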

The astute reader might observe that invertible, label-independent representations of graphs can be used to solve the graph isomorphism problem~\cite{schrijver2003combinatorial}. However, Proposition~\ref{prop: deterministic rule} shows that \algo cannot solve the graph isomorphism problem, as that would require a {\em deterministic} representation.
Noise is necessary to {\em break symmetry} in the otherwise deterministic dynamical system. 
 Observe that the time-series for a vertex in~\eqref{eq: basic filter} depends {\em only} on the time-series of its neighboring nodes, not any explicit vertex identifiers. As a result, two graphs for which all vertices have exactly the same neighbors will have exactly the same representations, even though they may be structurally different.  The proof of Proposition~\ref{prop: deterministic rule} illustrates this phenomenon for regular graphs.

\subsection{Formal Computation Model} 
\label{sec: formal model}
Although \algo~is an invertible representation of a graph, it is unclear how it compares to other GCNN representations in the literature. Below we define a formal computational model on graphs, called \model, that includes \algo~as well as a large class of GCNN representations  in the literature.
Abstracting different representations into a formal computational model allows us reason about the fundamental limits of these methods.
We show that GCNNs with a fixed number of convolutional steps cannot compute certain functions over graphs, where a sequence-based representation such as \algo~is able to do so.  For simplicity of notation, we consider undirected graphs in this section and in the rest of this paper.


{\bf \model~model}.  Consider an undirected graph $G(V,E)$ on which we seek to compute a function $f: \mathcal{G} \rightarrow \mathbb{R}$, where $\mathcal{G}$ is the space of all undirected graphs.  
In the $k$-\model~model, computations proceed in two rounds: 
In the {\em local step}, each vertex $v$ computes a representation $r(v)$ that depends only on the subgraph of vertices that are at a distance of at most $k$ from $v$. 
Following this, in the {\em gather step}, the function $f$ is computed by applying another function $g(\cdot)$ over the collection of vertex representations $\{ r(v) : v \in V \}$.
\algo~is an instance of the $\infty$-\model~model. 
GCNNs that use localized filters with a global aggregation (e.g., \citet{kipf2016semi, dai2017learning}) also fit this model (proof in Appendix~\ref{sec: gcnn belong to model proof}).
\begin{prop} \label{ex: gcnn belong to model}
The spectral GCNN representation  in~\citet{kipf2016semi} and the spatial GCNN representation in \cite{dai2017learning}  belong to the $4$-\model~model.
\end{prop}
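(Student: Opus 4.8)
The plan is to exhibit, for each of the two architectures, the two ingredients required by the $k$-\model~model of \S\ref{sec: formal model} with $k=4$: a \emph{local} per-vertex map whose output $r(v)$ depends only on the induced subgraph within distance $4$ of $v$, and a \emph{gather} function $g$ that produces the final output from the collection $\{r(v):v\in V\}$. Both GCNNs compute their vertex representations by stacking a fixed number of message-passing layers, each of which updates a vertex's state using only the states of its immediate neighbors; the key quantitative claim is that every such layer enlarges the receptive field by at most one hop, so a constant number of layers yields a constant-radius receptive field.

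First I would record the two layer updates in a common form. For \citet{kipf2016semi} the rule is $H^{(\ell+1)}=\sigma(\hat{A}\,H^{(\ell)}W^{(\ell)})$ with the renormalized operator $\hat A=\tilde D^{-1/2}\tilde A\tilde D^{-1/2}$, $\tilde A=A+I$; per vertex, $h_v^{(\ell+1)}=\sigma\big(\sum_{u\in N(v)\cup\{v\}}(\tilde d_v\tilde d_u)^{-1/2}h_u^{(\ell)}W^{(\ell)}\big)$. For \cite{dai2017learning} the structure2vec recursion has the form $\mu_v^{(t+1)}=F\big(x_v,\sum_{u\in N(v)}\mu_u^{(t)},\{w(u,v)\}\big)$ for a fixed parametric map $F$, run for a fixed number $T=4$ of rounds.

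The heart of the argument is a \emph{locality lemma} proved by induction on the layer index: the state of $v$ after $\ell$ rounds is a deterministic function of the induced subgraph (vertices, edges, and input features) on the ball $B_{r(\ell)}(v)$, where $r(\ell)=\ell$ for the plain-sum recursion of \cite{dai2017learning} and $r(\ell)=\ell+1$ for the degree-normalized recursion of \citet{kipf2016semi}. The base case $\ell=0$ is immediate, since $h_v^{(0)}$ is a function of $v$'s own features. In the inductive step, $h_v^{(\ell+1)}$ is assembled from the states $h_u^{(\ell)}$ of $u\in N(v)\cup\{v\}$, which by hypothesis live inside $B_{r(\ell)+1}(v)$, together with the local coefficients; the only subtlety is the normalization $(\tilde d_v\tilde d_u)^{-1/2}$, whose neighbor-degree factor $\tilde d_u$ reads the edges incident to a neighbor $u$ and hence reaches distance $2$ from $v$, producing the extra $+1$ in $r(\ell)$ for the Kipf model. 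Substituting the concrete depths used in the two papers---$T=4$ rounds for \cite{dai2017learning} and two convolutional layers for \citet{kipf2016semi}---yields receptive radii $4$ and $3$, both at most $4$, so each per-vertex map is a valid local step of the $4$-\model~model.

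Finally I would identify the gather step: the task output is produced by applying a single fixed function to the collection $\{r(v):v\in V\}$---an explicit pooling or summation followed by a size-independent transformation for graph-level quantities (as in the aggregate node scoring that defines the $Q$-function in \cite{dai2017learning}), with the per-vertex read-out of \citet{kipf2016semi} playing the same role under a trivial aggregation. This is exactly the map $g(\cdot)$ of the \model~model, so both representations belong to $4$-\model. I expect the main obstacle to be the bookkeeping in the locality lemma for the normalized operator: one must make precise the sense in which $r(v)$ \emph{depends only on} $B_k(v)$---namely that representations coincide whenever the radius-$k$ balls around corresponding vertices are isomorphic as featured subgraphs---and verify that the degree normalization inflates the radius by exactly one rather than propagating further. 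The plain-sum case of \cite{dai2017learning} is comparatively routine.
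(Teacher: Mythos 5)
Your proposal is correct and follows essentially the same route as the paper's proof: identify the stacked one-hop layers as the local step with constant receptive radius (two layers for \citet{kipf2016semi}, four rounds for \citet{dai2017learning}) and the final softmax/$Q$-network aggregation as the gather function $g(\cdot)$. Your inductive locality lemma is just a more careful version of what the paper asserts directly---your bookkeeping for the degree normalization yields radius $3$ for \citet{kipf2016semi} where the paper simply calls it $2$-local, and both radii sit within the claimed $4$-\model~bound.
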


{\bf Fixed-length representations are insufficient}. 
We show below that for a fixed $k>0$, no algorithm from the $k$-\model~model can compute certain 
canonical graph functions exactly (proof in Appendix\ref{sec: fixed length insuf proof}).

\begin{thm} \label{thm: fixed length insuf}
For any fixed $k>0$, there exists a function $f(\cdot)$ and an input graph instance $G$ such that no $k$-\model~algorithm can compute $f(G)$ exactly.
\end{thm}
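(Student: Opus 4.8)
The plan is to prove impossibility by an \emph{indistinguishability} (or \emph{covering}) argument: I would exhibit two graphs that every $k$-\model~algorithm is forced to treat identically, yet on which the target function $f$ disagrees. The structural observation that drives everything is that in the $k$-\model~model the local representation $r(v)$ can depend on nothing but the rooted isomorphism type of the radius-$k$ ball $B_k(v) := \{u : d(u,v) \le k\}$ around $v$. Since the representation must be label-independent (the same rule $r$ is applied at every vertex, with no access to vertex identifiers), whenever $B_k(v)$ and $B_k(v')$ are isomorphic as rooted graphs we must have $r(v) = r(v')$. Consequently $f(G) = g(\{ r(v) : v \in V\})$ is a function only of the \emph{multiset} of rooted $k$-ball types occurring in $G$, so it suffices to find two graphs with identical multisets of $k$-balls but different values of $f$.

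For the construction I would use cycles. Fix $m = 2k+2$ and set $G_1 = C_{2m}$ (a single cycle on $2m = 4k+4$ vertices) and $G_2 = C_m \sqcup C_m$ (two disjoint cycles, also $4k+4$ vertices in total). A short check shows that in any cycle of length at least $2k+2$ the radius-$k$ ball around every vertex induces a path on exactly $2k+1$ vertices rooted at its midpoint: the length bound guarantees the ball does not wrap around, so no closing edge appears. Hence in both $G_1$ and $G_2$ every one of the $4k+4$ vertices has the same rooted $k$-ball type $P_{2k+1}$, and the two multisets of ball types coincide exactly. Now take $f(G)$ to be the number of connected components, a canonical graph function: $f(G_1) = 1 \neq 2 = f(G_2)$.

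Combining the two steps finishes the argument. Because the collections $\{ r(v) : v \in V(G_1)\}$ and $\{ r(v) : v \in V(G_2)\}$ are equal as multisets, the gather function $g$ --- which sees only this unordered collection --- must return the same value on $G_1$ and $G_2$. This contradicts $f(G_1) \neq f(G_2)$, so no $k$-\model~algorithm can compute $f$ correctly on both inputs.

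The part that needs the most care is the first step: pinning down the precise sense in which $r(v)$ is determined by the rooted ball $B_k(v)$. I would make this rigorous by formalizing label-independence, i.e.\ a $k$-\model~algorithm specifies a single function $r$ that, applied to the rooted unlabeled neighborhood of a vertex, yields its representation, so isomorphic rooted neighborhoods map to equal representations. Once this invariance is stated cleanly the rest is routine: the ball computation for cycles is elementary, and the reduction from ``equal multisets of local views'' to ``equal outputs of $g$'' is immediate from the definition of the gather step. It is worth noting that this is the graph-representation analogue of classical locality lower bounds in the distributed \textsc{local} model, where indistinguishability of $k$-hop views is exactly what obstructs computing global quantities such as connectivity.
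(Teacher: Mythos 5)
Your proof is correct, and it takes a genuinely different route from the paper's. Both arguments share the same skeleton---an indistinguishability argument exhibiting two graphs with identical multisets of rooted $k$-ball types but different values of $f$, so that the gather step $g$, which sees only that multiset, is forced to err on one of them---and your formalization of label-independence (isomorphic rooted $k$-neighborhoods imply equal $r(v)$) is exactly the invariance the paper's proof also relies on. The constructions differ, though. You use $C_{4k+4}$ versus $C_{2k+2} \sqcup C_{2k+2}$ with $f$ the number of connected components; every $k$-ball in both graphs is the rooted path $P_{2k+1}$ (your length bound correctly prevents wrap-around, including the boundary case of length exactly $2k+2$, where the ball omits only the antipodal vertex), so the argument is essentially trivial once the invariance lemma is in place, and it is uniform and clean in $k$. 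The paper instead takes $f$ to be the size of the minimum vertex cover and builds two \emph{connected} graphs by replacing each node of a balanced and an imbalanced $7$-node binary tree with a chain of $3$ nodes (more generally, an odd number exceeding $k$), yielding covers of sizes $9$ and $10$ while admitting a bijection matching $k$-hop neighborhoods. What each approach buys: yours is more elementary and arguably the canonical locality lower bound (as you note, it mirrors indistinguishability arguments in the distributed \textsc{local} model), but one could object that disconnectedness makes the counterexample feel cheap---local views obviously cannot detect global connectivity. The paper's choice is deliberately tied to its narrative: the hard function is one of the combinatorial optimization problems the paper actually studies, the counterexample graphs are connected (a mildly stronger statement), and the same trees are then revisited in the paper's Appendix~\ref{sec: seq heur trees}, where an $\infty$-\model{} sequential heuristic computes MVC on them, exhibiting the claimed separation between fixed-length and sequence-based representations on the very same instance. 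Since the theorem only asserts existence of \emph{some} $f$ and instances, your proof fully establishes it; it just would not support the paper's follow-on separation example without an analogous $\infty$-\model{} procedure for your $f$ (which, for counting components, is also easy to supply).
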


For the graph $G$ and function $f(\cdot)$ used in the proof of Theorem~\ref{thm: fixed length insuf}, we present a sequence-based representation (from the $\infty$-\model) in Appendix~\ref{sec: seq heur trees} that is able to asymptotically compute $f(G)$. 
This example demonstrates that sequence-based representations are more powerful than fixed-length graph representations in the \model~model. Further, it illustrates how a trained neural network can produce sequential representations that can be used to compute specific functions.

 {\bf \algo and graph kernels.} 
Graph kernels~\citep{yanardag2015deep, vishwanathan2010graph, kondor2016multiscale} are another popular method of representing graphs. 
The main idea here is to define or learn a vocabulary of substructures (e.g., graphlets, paths, subtrees), and use counts of these substructures in a graph as its representation. 
The Weisfeiler-Lehman (WL) graph kernel~\citep{shervashidze2011weisfeiler, weisfeiler1968reduction} is closest to \algo. 
Starting with `labels' (e.g., vertex degree) on vertices, the WL kernel iteratively performs local label updates similar to~\eqref{eq: basic filter} but typically using discrete functions/maps. 
The final representation consists of counts of these labels (i.e., a histogram) in the graph. 
Each label corresponds to a unique subtree pattern.
However, the labels themselves are not part of any structured space, and cannot be used to compare the similarity of the subtrees they represent. 
Therefore, during testing if new unseen labels (or equivalently subtrees) are encountered the resulting representation may not generalize.

\section{Neural Network Design} \label{sec: Neural Network Design}

We consider a reinforcement learning (RL) formulation for combinatorial optimization problems on graphs. RL is well-suited to such problems since the true  `labels' (i.e., the optimal solution) may be unavailable or hard to compute. 
Additionally, the objective functions in these problems can be used as natural reward signals.
An RL approach has been explored in recent works~\citet{vinyals2015pointer, bello2016neural, dai2017learning} under different representation techniques.  
Our learning framework uses the \algo~representation.
Fig.~\ref{fig: neural arch} shows our neural network architecture. 
We feed the trajectories output by \algo for all the vertices into a recurrent neural network (specifically RNN-GRU), whose outputs are then aggregated by a feedforward network to select a vertex. 
Henceforth we call this the \gtos neural network architecture. 
The key feature of this design is that the length of the sequential representation is {\em not} fixed; we vary it depending on the input instance. 
We show that our model is able to learn rules---for both generating the sequence and processing it with the RNN---that generalize to operate on long sequences.
In turn, this translates to algorithmic solutions that scale to large graph sizes.



\noindent {\bf Reinforcement learning model.} We consider a RL formulation in which vertices are chosen {\em one at a time}. 
Each time the RL agent chooses a vertex, it receives a reward.
The goal of training is to learn a policy such that cumulative reward is maximized.
We use  $Q$-learning to train the network.
\begin{wrapfigure}{l}{0.55\textwidth} 
\vspace{0pt}
  \begin{center}
    \includegraphics[width=0.45\textwidth]{./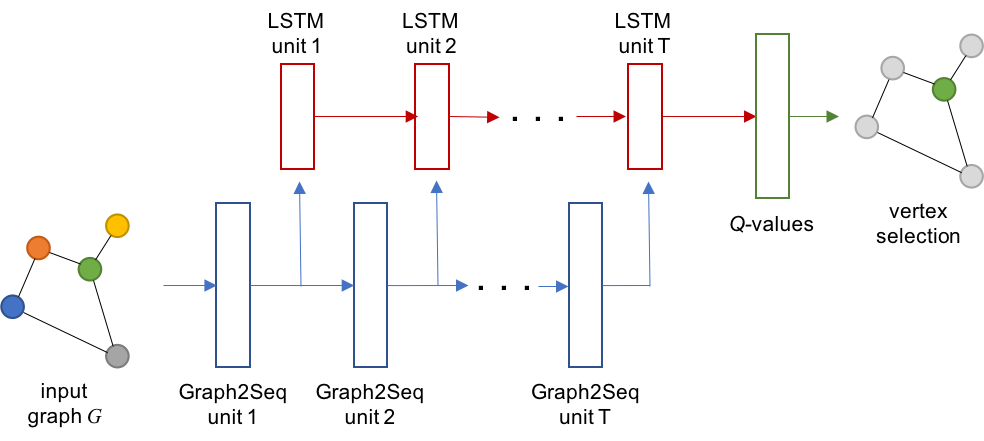}
    \caption{\small The \gtos neural network architecture. The GRU units process the representation output by \algo, operating independently on each vertex's time-series.}
    \label{fig: neural arch}
  \end{center}
  \vspace{-12pt}
\end{wrapfigure} 
For input graph instance $G(V,E)$, a subset $S\subseteq V$ and $a\in V\backslash S$, this involves using a neural network to approximate a $Q$-{\em function} $Q(G,S,a)$. Here $S$ represents the set of vertices already picked.
The neural network comprises of three modules:
(1) {\em Graph2Seq}, that takes as input the graph $G$ and set $S$ of vertices chosen so far. It generates a sequence of vectors as output for each vertex.
(2) {\em Seq2Vec} reads the sequences output of \algo and summarizes it into one vector per vertex.
(3) $Q$-{\em Network} takes the vector summary of each vertex $a \in V$ and outputs the estimated $Q(G,S,a)$ value. 
The overall architecture is illustrated in Fig.~\ref{fig: neural arch}.
To make the network practical, we truncate the sequence outputs of \algom~to a length of $T$.
However the value of $T$ is not fixed, and is varied both during training and testing according to the size and complexity of the graph instances encountered; see \S~\ref{sec: evals} for details.
We describe each module below.

{\bf Graph2Seq.}
Let $\mathbf{x}_v(t)$ denote the state of vertex $v$ and $c_v(t)$ denote the binary variable that is one if $v \in S$ and zero otherwise, at time-step $t$ in the \algo evolution.
Then, the trajectory of each vertex $v\in V$ evolves as
$ \mathbf{x}_v(t+1) = \mathrm{relu} (
 \mathbf{W}_{G,1} \sum_{u\in\Gamma(v)}\mathbf{x}_u(t) + \mathbf{w}_{G,2} c_v(t) + \mathbf{w}_{G,3} ),$ 
for $t=0,1,\ldots, T-1$. 
$\mathbf{W}_{G,1}\in\mathbb{R}^{d\times d}, \mathbf{w}_{G,2}\in\mathbb{R}^d, \mathbf{w}_{G,3}\in\mathbb{R}^d$ are trainable parameters, and $\mathbf{x}_v(0)$ is initialized to all-zeros for each $v\in V$.

{\bf Seq2Vec and $Q$-Network.}
The sequences $\{ (\mathbf{x}_v(t))^T_{t=1}: v \in V\} $ are processed by GRU units~\citep{chung2014empirical} at the vertices. 
At time-step $t$, the recurrent unit at vertex $v$ computes as input a function that depends on (i) $\mathbf{x}_v(t)$, the embedding for node $v$, (ii) $\sum_{u\in\Gamma(v)}\mathbf{x}_u(t)$, the neighboring node embeddings and (iii) $\sum_{u\in V}\mathbf{x}_u(t)$, a summary of embeddings of all nodes. 
This input is combined with the GRU's cell state $\mathbf{y}_v(t-1)$ to produce an updated cell state $\mathbf{y}_v(t)$.
The cell state at the final time-step $\mathbf{y}_v(T)$ is the desired vector summary of the \algo~sequence, and is fed as input to the $Q$-network.
We refer to Appendix~\ref{sec: net arch appendix} for equations on Seq2Vec. 
The $Q$-values are estimated as 
\begin{align}
\tilde{Q}(G,S,v) = &\mathbf{w}_{Q,1}^T \mathrm{relu} ( \mathbf{W}_{Q,2} \sum_{u\in V} \mathbf{y}_u(T) ) 
+ \mathbf{w}_{Q,3}^T \mathrm{relu} \left( \mathbf{W}_{Q,4} \mathbf{y}_v(T) \right), \label{eq: Q func est}
\end{align}
with $\mathbf{W}_{Q,1}, \mathbf{W}_{Q,3} \in \mathbb{R}^{d\times d}$ and $\mathbf{w}_{Q,2}, \mathbf{w}_{Q,4} \in \mathbb{R}^d$ being learnable parameters.
All transformation functions in the network leading up to~\eqref{eq: Q func est} are differentiable.
This makes the whole network differentiable, allowing us to train it end to end. 

{\bf Remark.}
In Fig.~\ref{fig: neural arch} the \algo RNN and the GRU RNN can also be thought of as two layers of a two-layer GRU. 
We have deliberately separated the two RNNs to highlight the fact that the sequence produced by \algo (blue in Fig,~\ref{fig: neural arch}) is the embedding of the graph, which is then read using a GRU layer (red in Fig.~\ref{fig: neural arch}).  
Our architecture is not unique, and other designs for generating and/or reading the sequence are possible.





\section{Evaluations} \label{sec: evals}

\begin{figure}[t]
  \centerline{\subfigure[]{\includegraphics[height=32mm]{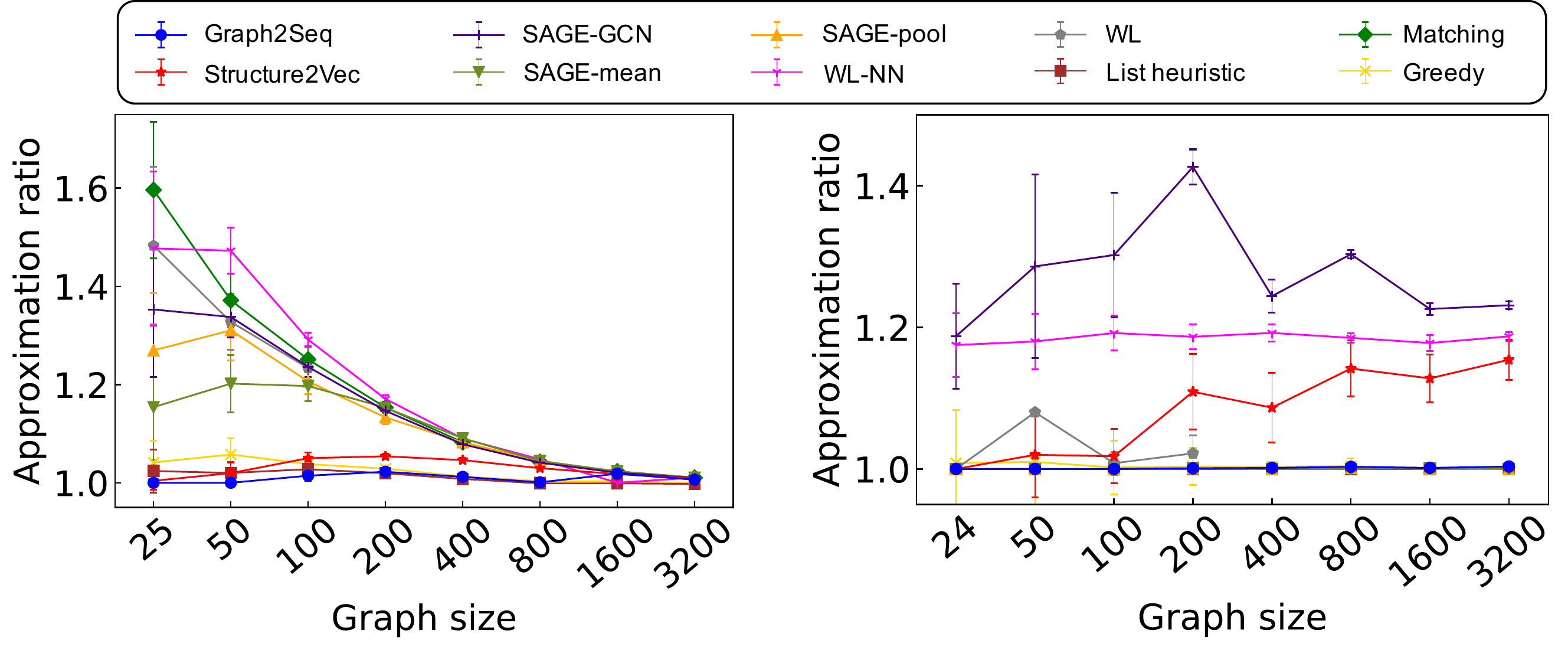}
     \label{fig: scalability}}
    \subfigure[]{
    	{\scriptsize
    	\begin{tabular}[b]{ccc}
		\hline
		Heuristic & $10k$ nodes & $25k$ nodes \\
		\hline
		Graph2Seq (ER) & 6444 & 16174 \\
		Structure2Vec (ER) & 6557 & 16439 \\
		\hline 
		Graph2Seq (bipartite) & 5026 & 12603 \\
		Structure2Vec (bipartite) & 5828 & 14409 \\
		\hline
		\smallskip 
		\smallskip
		\smallskip
		\smallskip
    	\end{tabular}
	}
    \label{fig: large table}	
    }
}
    \caption{\small Scalability of \gtos~and GCNN in (a) Erdos-Renyi graphs (left), random bipartite graphs (right), and (b) on much larger graphs. The neural networks have been trained over the same graph types as the test graphs. Error bars show one standard deviation.}
    \label{fig: scalabilityfull}
    \vspace{-10pt}
\end{figure}

In this section we present our evaluation results for \algo.
We address the following central questions: 
(1) How well does \gtos~scale?
(2) How well does \gtos~generalize to new graph types? 
(3) Can we apply \gtos~to a variety of problems? 
(4) Does adversarial training improve scalability and generalization? 
To answer these questions, we experiment with \gtos on three classical graph optimization problems: minimum vertex cover (MVC), max cut (MC) and maximum independent set (MIS). 
These are a set of problems well known to be NP-hard, and also greatly differ in their structure~\citep{williamson2011design}.
We explain the problems below. 

\noindent {\bf Minimum vertex cover.}
The MVC of a graph $G(V,E)$ is the smallest cardinality set $S\subseteq V$ such that for every edge $(u,v) \in E$ at least one of $u$ or $v$ is in $S$.
Approximation algorithms to within a factor 2 are known for MVC; however it cannot be approximated better than 1.3606 unless P$=$NP. \\
\noindent {\bf Max cut.}
In the MC problem, for an input graph instance $G(V,E)$ we seek a cut $(S, S^c)$ where $S\subseteq V$ such that the number of edges crossing the cut is maximized. 
This problem can be approximated within a factor 1.1383 of optimal, but not within 1.0684 unless P$=$NP.  \\
\noindent {\bf Maximum independent set.}
For a graph $G(V,E)$ the MIS denotes a set $S\subseteq V$ of maximum cardinality such that for any $u, v\in S$, $(u,v) \notin E$. 
The maximum independent set is complementary to the minimum vertex cover---if $S$ is a MIC of $G$, then $V \backslash S$ is the MVC. 
However, from an approximation standpoint, MIS is hard to approximate within $n^{1-\epsilon}$ for any $\epsilon > 0$, despite constant factor approximation algorithms known for MVC.

\noindent {\bf Heuristics compared.}
In each problem, we compare \gtos~against:
(1) Structure2Vec~\citep{dai2017learning}, 
(2) GraphSAGE~\citep{hamilton2017inductive} using (a) GCN, (b) mean and (c) pool aggregators, 
(3) WL kernel NN~\citep{lei2017deriving}, 
(4) WL kernel embedding, in which the feature map corresponding to WL subtree kernel of the subgraph in a 5-hop neighborhood around each vertex is used as its vertex embedding~\citep{shervashidze2011weisfeiler}.
Since we test on large graphs, instead of using a learned label lookup dictionary we use a standard hash function for label shortening at each step. 
In each of the above, the outputs of the last layer are fed to a $Q$-learning network as in \S\ref{sec: Neural Network Design}.
Unlike \gtos the depth of the above neural network (NN) models are fixed across input instances. 
We also consider the following well-known (non-learning based) heuristics for each problem:
(5) {\em Greedy algorithms},
(6) {\em List heuristic},
(7) {\em Matching heuristic}. 
We refer to Appendix~\ref{sec: heuristics} for details on these heuristics. 

We attempt to compute the optimal solution via the Gurobi optimization package~\citep{gurobi}. We run the Gurobi solver with a cut-off time of 240~s, and  
report performance in the form of approximation ratios relative to the solution found by the Gurobi solver. We do not compare against Deepwalk~\citep{perozzi2014deepwalk} or node2vec~\citep{grover2016node2vec} since these methods are designed for obtaining vertex embeddings over a single graph. 
They are inappropriate for models that need to generalize over multiple graphs. 
This is because the vertex embeddings in these approaches can be arbitrarily rotated without a consistent `alignment' across graphs. 
The number of parameters can also grow linearly with graph size. 
We refer to~\citet[Appendix D]{hamilton2017inductive} for details. 

\begin{figure*}[t]
  \centerline{
     \includegraphics[height=37mm]{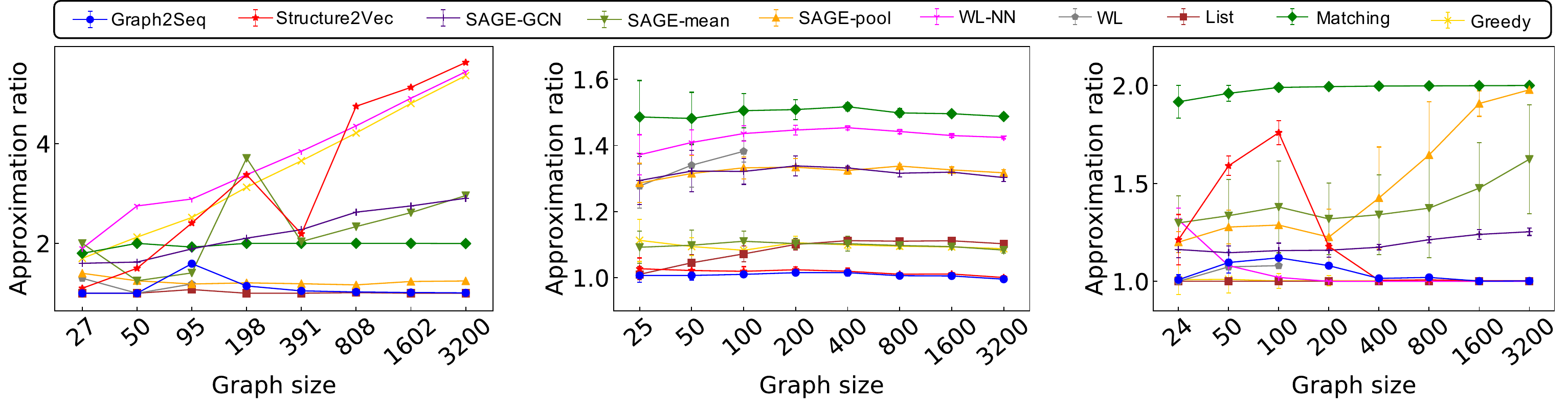}
     \label{fig: er greedy}
     }
     \vspace{-2mm}
    \caption{\small Minimum vertex cover in worst-case graphs for greedy (left), random regular graphs (center), and random bipartite graphs (right), using models trained on size-15 Erdos-Renyi graphs. Each point shows that average and standard deviation for 20 randomly sampled graphs of that type. The worst-case greedy graph type, however, has only one graph at each size; hence, the results show outliers for this graph type.}
    \label{fig: er_trained}
    \vspace{-15pt}
\end{figure*}

\noindent {\bf Training.}
We train \gtos, Structure2Vec, GraphSAGE and WL kernel NN all on small Erdos-Renyi (ER) graphs of size 15, and edge probability $0.15$. 
During training we truncate the \algo representation to 5 observations (i.e., $T=5$, see Fig.~\ref{fig: neural arch}). 
In each case, the model is trained for 100,000 iterations, except WL kernel NN which is trained for 200,000 iterations since it has more parameters. 
We use experience replay~\citep{mnih2013playing}, a learning rate of $10^{-3}$, Adam optimizer~\citep{kingma2014adam} and an exploration probability that is reduced from $1.0$ to a resting value of $0.05$ over 10,000 iterations. 
The amount of noise added in the evolution ($n_v(t)$ in Equation~\ref{eq: basic filter}) seemed to not matter; we have set the noise variance $\sigma^2$ to zero in all our experiments (training and testing).
As far as possible, we have tried to keep the hyperparameters in \gtos and all the neural network baselines to be the same.  
For example, all of the networks have a vertex embedding dimension of $16$, use the same neural architecture for the $Q$-network and Adam optimizer for training. 

\noindent {\bf Testing}.
For any $T > 0$, let $\text{\gtos}(T)$ denote the neural network (\S\ref{sec: Neural Network Design}) in which \algo is restricted to a sequence length of $T$.
To test a graph $G(V,E)$, we feed $G$ as input to the networks $\text{\gtos}(1), \text{\gtos}(2), \ldots, \text{\gtos}(T_\mathrm{max})$, and choose the `best' output as our final output.  
For each $T$, $\text{\gtos}(T)$ outputs a solution set $S_T \subseteq V$. 
The best output corresponds to that $S_T$ having the maximum objective value. 
For e.g., in the case of MVC the $S_T$ having the smallest cardinality is the best output. 
This procedure is summarized in detail in Algorithm~\ref{algo: graph2seq testing} 
in Appendix~\ref{sec: net arch appendix}.
We choose $T_\mathrm{max}=15$ in our experiments.
The time complexity of \gtos is $O(|E|T_\mathrm{max} + |V|)$.

To test generalization across graph types, we consider the following graph types: 
(1) ER graphs with edge probability $0.15$; 
(2) random regular graphs with degree $4$;  
(3) random bipartite graphs with equal sized partites and edge probability $0.75$; 
(4) worst-case examples, 
such as the worst-case graph for the greedy heuristic on MVC, which has a $O(\log n)$ approximation factor~\cite{johnson1973approximation}.
(5) Two-dimensional grid graphs, in which the sides contain equal number of vertices. 
For each type, we test on graphs with number of vertices ranging from 25--3200 in exponential increments (except WL embedding which is restricted to size 100 or 200 since it is computationally expensive).
Some of the graphs considered are dense---e.g., a 3200 node ER graph has 700,000 edges; a 3200 node random bipartite graph has 1.9 million edges.
We also test on sparse ER and bipartite graphs of sizes $10,000$ and $25000$ with an average degree of 7.5.

\subsection{Scalability and Generalization across Graph Types} \label{sec: scalability}

{\bf Scalability.}
To test scalability, we train all the NN models on small graphs of a type, and test on larger graphs of the same type. 
For each NN model, we use the {\em same} trained parameters for all of the test examples in a graph type.
We consider the MVC problem and train on: (1) size-15 ER graphs, and (2) size-20 random bipartite graphs.  
The models trained on ER graphs are then tested on ER graphs of sizes 25--3200; similarly the models trained on bipartite graphs are tested on bipartite graphs of sizes 24--3200.
We present the results in Fig.~\ref{fig: scalability}.
We have also included non-learning-based heuristics for reference.
In both ER and bipartite graphs, we observe that \gtos generalizes well to graphs of size roughly 25 through 3200, even though it was trained on size-15 graphs. 
Other NN models, however, either generalize well on only one of the two types (e.g., Structure2Vec performs well on ER graphs, but not on bipartite graphs) or do not generalize in both types. 
\gtos~generalizes well to even larger graphs.
Fig.~\ref{fig: large table} presents results of testing on size 10,000 and 25,000 ER and random bipartite graphs.
We observe the vertex cover output by \gtos~is at least 100 nodes fewer than Structure2Vec.

{\bf Generalization across graph types.}
Next we test how the models generalize to different graph types. 
We train the models on size-15 ER graphs, and test them on three graph types: (i) worst-case graphs, (ii) random regular graphs, and (iii) random bipartite graphs. 
For each graph type, we vary the graph size from 25 to 3200 as before. 
Fig.~\ref{fig: er_trained} plots results for the different baselines. 
In general, \gtos~has a performance that is within 10\% of the optimal, across the range of graph types and sizes considered.
The other NN baselines demonstrate behavior that is not consistent and have certain classes of graph types/sizes where they perform poorly.


{\bf Adversarial training.}
We also trained \gtos on a certain class of adversarial `hard' examples for minimum vertex cover, and observed further improvements in generalization.
We refer to Appendix~\ref{apx: eval} for details and results of this method. 

\subsection{Other Problems: MC and MIS} \label{sec: mc and mis}

We test and compare \gtos~on the MC and MIS problems. 
As in MVC, our results demonstrate consistently good scalability and generalization of \gtos~across graph types and sizes.
As before, we train the NN models on size-15 ER graphs ($p=0.15$) and test on different graphs. 

\noindent {\bf Max cut.}
We test on 
(1) ER graphs, and (2) two-dimensional grid graphs. 
For each graph type, we vary the number of vertices in the range $25$--$3200$, and use the same trained model for all of the tests. 
The results of our tests are presented in Fig.~\ref{fig: maxcut}.
We notice that for both graph types \gtos~is able to achieve an approximation less that $1.04$ times the (timed) integer program output.   
\begin{figure}[t]
  \centerline{
    \subfigure[]{\includegraphics[height=28mm]{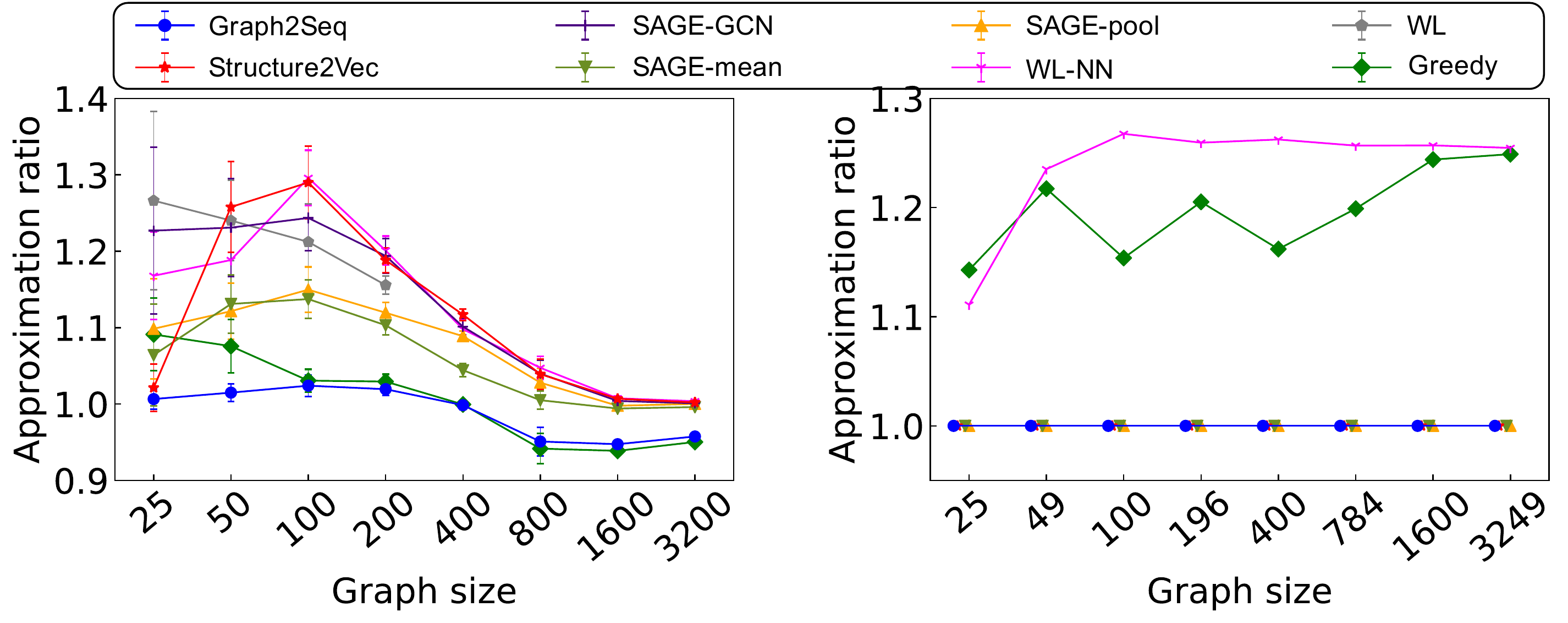}
     \label{fig: maxcut}}
    \subfigure[]{\includegraphics[height=28mm]{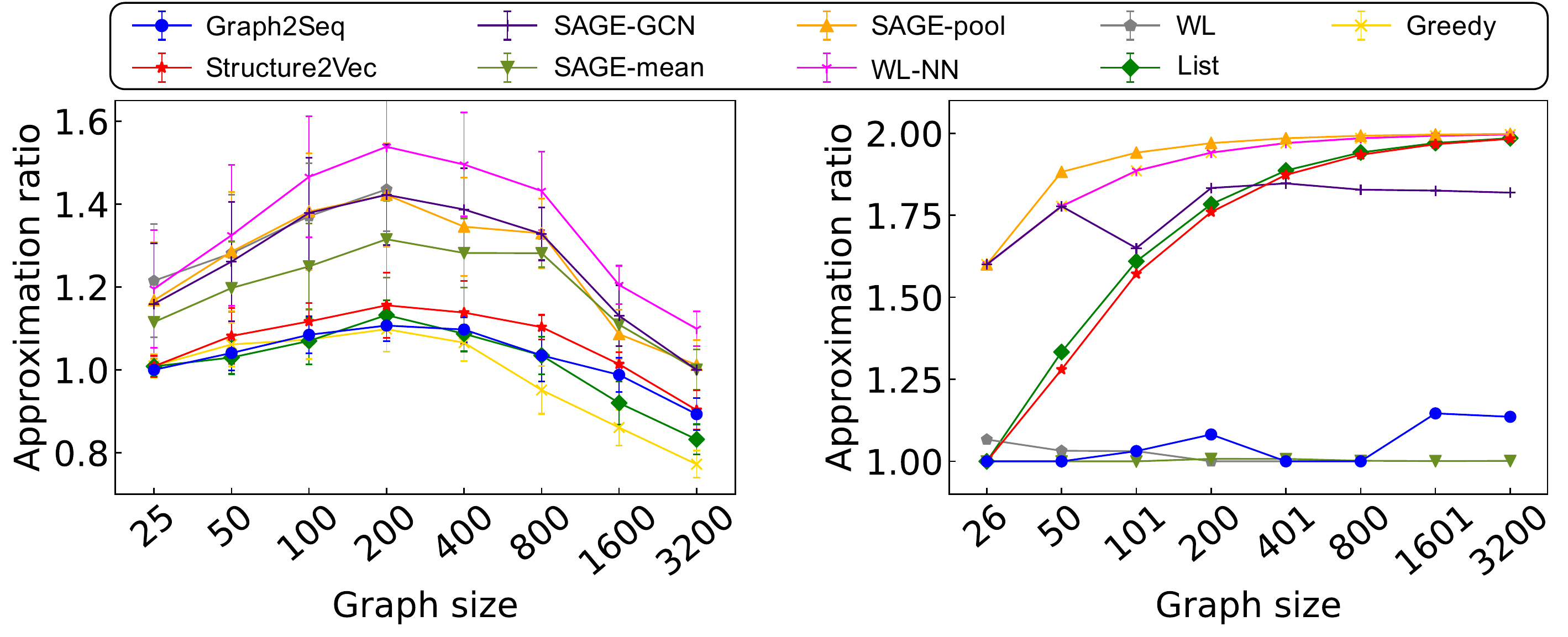}
     \label{fig: indset}}
    }
    \caption{\small (a) Max cut in Erdos-Renyi graphs (left), Grid graphs (right). (b) Maximum independent set in Erdos-Renyi graphs (left), structured bipartite graphs (right). Some approximation ratios are less than 1 due to the time cut-off of the Gurobi solver.}
    \label{fig: mcandindset}
      \vspace{-18pt}
\end{figure}
\\ \noindent {\bf Maximum independent set.}
We test on (1) ER graphs, and (2) worst-case bipartite graphs for the greedy heuristic.
The number of vertices is varied in the range 25--3200 for each graph type. 
We present our results in Fig.~\ref{fig: indset}.
In ER graphs, \gtos~shows a reasonable consistency in which it is always less than 1.10 times the (timed) integer program solution. 
In the bipartite graph case we see a performance within 8\% of optimal across all sizes. 

\section{Conclusion}


We proposed \algo~ that represents vertices of graphs as infinite time-series of vectors. The representation melds naturally with modern RNN architectures that take time-series as inputs. We applied this combination to three canonical combinatorial optimization problems on graphs, ranging across the complexity-theoretic hardness spectrum. Our empirical results best state-of-the-art approximation algorithms for these problems on a variety of graph sizes and types. In particular, \algo exhibits significantly better scalability and generalization than existing GCNN representations in the literature.  An open direction involves a more systematic study of the capabilities of \algo across the panoply of graph combinatorial optimization problems, as well as its performance in concrete (and myriad) downstream applications. 
Another open direction involves interpreting the policies learned by \algo to solve specific combinatorial optimization problems (e.g., as in LIME \citep{ribeiro2016should}). 
A detailed analysis of the \algo dynamical system to study the effects of sequence length on the representation is also an important direction.


\bibliography{mybib}

\begin{thebibliography}{61}
\providecommand{\natexlab}[1]{#1}
\providecommand{\url}[1]{\texttt{#1}}
\expandafter\ifx\csname urlstyle\endcsname\relax
  \providecommand{\doi}[1]{doi: #1}\else
  \providecommand{\doi}{doi: \begingroup \urlstyle{rm}\Url}\fi

\bibitem[Angelopoulos \& Borodin(2003)Angelopoulos and
  Borodin]{angelopoulos2003randomized}
Spyros Angelopoulos and Allan Borodin.
\newblock Randomized priority algorithms.
\newblock In \emph{WAOA}, pp.\  27--40. Springer, 2003.

\bibitem[Atwood \& Towsley(2016)Atwood and Towsley]{atwood2016diffusion}
James Atwood and Don Towsley.
\newblock Diffusion-convolutional neural networks.
\newblock In \emph{Advances in Neural Information Processing Systems}, pp.\
  1993--2001, 2016.

\bibitem[Bello et~al.(2016)Bello, Pham, Le, Norouzi, and
  Bengio]{bello2016neural}
Irwan Bello, Hieu Pham, Quoc~V Le, Mohammad Norouzi, and Samy Bengio.
\newblock Neural combinatorial optimization with reinforcement learning.
\newblock \emph{arXiv preprint arXiv:1611.09940}, 2016.

\bibitem[Borodin et~al.(2003)Borodin, Nielsen, and
  Rackoff]{borodin2003incremental}
Allan Borodin, Morten~N Nielsen, and Charles Rackoff.
\newblock (incremental) priority algorithms.
\newblock \emph{Algorithmica}, 37\penalty0 (4):\penalty0 295--326, 2003.

\bibitem[Borodin et~al.(2010)Borodin, Boyar, Larsen, and
  Mirmohammadi]{borodin2010priority}
Allan Borodin, Joan Boyar, Kim~S Larsen, and Nazanin Mirmohammadi.
\newblock Priority algorithms for graph optimization problems.
\newblock \emph{Theoretical Computer Science}, 411\penalty0 (1):\penalty0
  239--258, 2010.

\bibitem[Bronstein et~al.(2017)Bronstein, Bruna, LeCun, Szlam, and
  Vandergheynst]{bronstein2017geometric}
Michael~M Bronstein, Joan Bruna, Yann LeCun, Arthur Szlam, and Pierre
  Vandergheynst.
\newblock Geometric deep learning: going beyond euclidean data.
\newblock \emph{IEEE Signal Processing Magazine}, 34\penalty0 (4):\penalty0
  18--42, 2017.

\bibitem[Bruna \& Li(2017)Bruna and Li]{bruna2017community}
Joan Bruna and Xiang Li.
\newblock Community detection with graph neural networks.
\newblock \emph{arXiv preprint arXiv:1705.08415}, 2017.

\bibitem[Bruna et~al.(2014)Bruna, Zaremba, Szlam, and Lecun]{bruna2013spectral}
Joan Bruna, Wojciech Zaremba, Arthur Szlam, and Yann Lecun.
\newblock Spectral networks and locally connected networks on graphs.
\newblock In \emph{International Conference on Learning Representations
  (ICLR2014), CBLS, April 2014}, 2014.

\bibitem[Chung et~al.(2014)Chung, Gulcehre, Cho, and
  Bengio]{chung2014empirical}
Junyoung Chung, Caglar Gulcehre, Kyunghyun Cho, and Yoshua Bengio.
\newblock Empirical evaluation of gated recurrent neural networks on sequence
  modeling.
\newblock In \emph{NIPS 2014 Workshop on Deep Learning, December 2014}, 2014.

\bibitem[Defferrard et~al.(2016)Defferrard, Bresson, and
  Vandergheynst]{defferrard2016convolutional}
Micha{\"e}l Defferrard, Xavier Bresson, and Pierre Vandergheynst.
\newblock Convolutional neural networks on graphs with fast localized spectral
  filtering.
\newblock In \emph{Advances in Neural Information Processing Systems}, pp.\
  3844--3852, 2016.

\bibitem[Duvenaud et~al.(2015)Duvenaud, Maclaurin, Iparraguirre, Bombarell,
  Hirzel, Aspuru-Guzik, and Adams]{duvenaud2015convolutional}
David~K Duvenaud, Dougal Maclaurin, Jorge Iparraguirre, Rafael Bombarell,
  Timothy Hirzel, Al{\'a}n Aspuru-Guzik, and Ryan~P Adams.
\newblock Convolutional networks on graphs for learning molecular fingerprints.
\newblock In \emph{Advances in neural information processing systems}, pp.\
  2224--2232, 2015.

\bibitem[Gilmer et~al.(2017)Gilmer, Schoenholz, Riley, Vinyals, and
  Dahl]{gilmer2017neural}
Justin Gilmer, Samuel~S Schoenholz, Patrick~F Riley, Oriol Vinyals, and
  George~E Dahl.
\newblock Neural message passing for quantum chemistry.
\newblock In \emph{International Conference on Machine Learning}, pp.\
  1263--1272, 2017.

\bibitem[Gori et~al.(2005)Gori, Monfardini, and Scarselli]{gori2005new}
Marco Gori, Gabriele Monfardini, and Franco Scarselli.
\newblock A new model for learning in graph domains.
\newblock In \emph{Neural Networks, 2005. IJCNN'05. Proceedings. 2005 IEEE
  International Joint Conference on}, volume~2, pp.\  729--734. IEEE, 2005.

\bibitem[Grandl et~al.(2016)Grandl, Kandula, Rao, Akella, and
  Kulkarni]{grandl2016g}
Robert Grandl, Srikanth Kandula, Sriram Rao, Aditya Akella, and Janardhan
  Kulkarni.
\newblock G: Packing and dependency-aware scheduling for data-parallel
  clusters.
\newblock In \emph{Proceedings of OSDI?16: 12th USENIX Symposium on Operating
  Systems Design and Implementation}, pp.\ ~81, 2016.

\bibitem[Granger(1980)]{granger1980testing}
Clive~WJ Granger.
\newblock Testing for causality: a personal viewpoint.
\newblock \emph{Journal of Economic Dynamics and control}, 2:\penalty0
  329--352, 1980.

\bibitem[Graves et~al.(2016)Graves, Wayne, Reynolds, Harley, Danihelka,
  Grabska-Barwi{\'n}ska, Colmenarejo, Grefenstette, Ramalho, Agapiou,
  et~al.]{graves2016hybrid}
Alex Graves, Greg Wayne, Malcolm Reynolds, Tim Harley, Ivo Danihelka, Agnieszka
  Grabska-Barwi{\'n}ska, Sergio~G{\'o}mez Colmenarejo, Edward Grefenstette,
  Tiago Ramalho, John Agapiou, et~al.
\newblock Hybrid computing using a neural network with dynamic external memory.
\newblock \emph{Nature}, 538\penalty0 (7626):\penalty0 471--476, 2016.

\bibitem[Grover \& Leskovec(2016)Grover and Leskovec]{grover2016node2vec}
Aditya Grover and Jure Leskovec.
\newblock node2vec: Scalable feature learning for networks.
\newblock In \emph{Proceedings of the 22nd ACM SIGKDD international conference
  on Knowledge discovery and data mining}, pp.\  855--864. ACM, 2016.

\bibitem[Gurobi~Optimization(2016)]{gurobi}
Inc. Gurobi~Optimization.
\newblock Gurobi optimizer reference manual, 2016.
\newblock URL \url{http://www.gurobi.com}.

\bibitem[Hamilton et~al.(2017{\natexlab{a}})Hamilton, Ying, and
  Leskovec]{hamilton2017inductive}
Will Hamilton, Zhitao Ying, and Jure Leskovec.
\newblock Inductive representation learning on large graphs.
\newblock In \emph{Advances in Neural Information Processing Systems}, pp.\
  1025--1035, 2017{\natexlab{a}}.

\bibitem[Hamilton et~al.(2017{\natexlab{b}})Hamilton, Ying, and
  Leskovec]{hamilton2017representation}
William~L Hamilton, Rex Ying, and Jure Leskovec.
\newblock Representation learning on graphs: Methods and applications.
\newblock \emph{arXiv preprint arXiv:1709.05584}, 2017{\natexlab{b}}.

\bibitem[Hopfield \& Tank(1985)Hopfield and Tank]{hopfield1985neural}
John~J Hopfield and David~W Tank.
\newblock Neural computation of decisions in optimization problems.
\newblock \emph{Biological cybernetics}, 52\penalty0 (3):\penalty0 141--152,
  1985.

\bibitem[Jain et~al.(2016)Jain, Zamir, Savarese, and
  Saxena]{jain2016structural}
Ashesh Jain, Amir~R Zamir, Silvio Savarese, and Ashutosh Saxena.
\newblock Structural-rnn: Deep learning on spatio-temporal graphs.
\newblock In \emph{Proceedings of the IEEE Conference on Computer Vision and
  Pattern Recognition}, pp.\  5308--5317, 2016.

\bibitem[Johnson(2016)]{johnson2016learning}
Daniel~D Johnson.
\newblock Learning graphical state transitions.
\newblock 2016.

\bibitem[Johnson(1973)]{johnson1973approximation}
David~S Johnson.
\newblock Approximation algorithms for combinatorial problems.
\newblock In \emph{Proceedings of the fifth annual ACM symposium on Theory of
  computing}, pp.\  38--49. ACM, 1973.

\bibitem[Khalil et~al.(2017)Khalil, Dai, Zhang, Dilkina, and
  Song]{dai2017learning}
Elias Khalil, Hanjun Dai, Yuyu Zhang, Bistra Dilkina, and Le~Song.
\newblock Learning combinatorial optimization algorithms over graphs.
\newblock In \emph{Advances in Neural Information Processing Systems}, pp.\
  6348--6358, 2017.

\bibitem[Kingma \& Ba(2015)Kingma and Ba]{kingma2014adam}
Diederik~P Kingma and Jimmy~Lei Ba.
\newblock Adam: Amethod for stochastic optimization.
\newblock In \emph{Proceedings of the 3rd International Conference on Learning
  Representations (ICLR)}, 2015.

\bibitem[Kipf \& Welling(2016)Kipf and Welling]{kipf2016semi}
Thomas~N Kipf and Max Welling.
\newblock Semi-supervised classification with graph convolutional networks.
\newblock \emph{arXiv preprint arXiv:1609.02907}, 2016.

\bibitem[Kondor \& Pan(2016)Kondor and Pan]{kondor2016multiscale}
Risi Kondor and Horace Pan.
\newblock The multiscale laplacian graph kernel.
\newblock In \emph{Advances in Neural Information Processing Systems}, pp.\
  2990--2998, 2016.

\bibitem[Kool \& Welling(2018)Kool and Welling]{kool2018attention}
WWM Kool and M~Welling.
\newblock Attention solves your tsp.
\newblock \emph{arXiv preprint arXiv:1803.08475}, 2018.

\bibitem[Krizhevsky et~al.(2012)Krizhevsky, Sutskever, and
  Hinton]{krizhevsky2012imagenet}
Alex Krizhevsky, Ilya Sutskever, and Geoffrey~E Hinton.
\newblock Imagenet classification with deep convolutional neural networks.
\newblock In \emph{Advances in neural information processing systems}, pp.\
  1097--1105, 2012.

\bibitem[Kuhn et~al.(2016)Kuhn, Moscibroda, and Wattenhofer]{kuhn2016local}
Fabian Kuhn, Thomas Moscibroda, and Roger Wattenhofer.
\newblock Local computation: Lower and upper bounds.
\newblock \emph{Journal of the ACM (JACM)}, 63\penalty0 (2):\penalty0 17, 2016.

\bibitem[Lei et~al.(2017)Lei, Jin, Barzilay, and Jaakkola]{lei2017deriving}
Tao Lei, Wengong Jin, Regina Barzilay, and Tommi Jaakkola.
\newblock Deriving neural architectures from sequence and graph kernels.
\newblock In \emph{International Conference on Machine Learning}, pp.\
  2024--2033, 2017.

\bibitem[Li et~al.(2015)Li, Tarlow, Brockschmidt, and Zemel]{li2015gated}
Yujia Li, Daniel Tarlow, Marc Brockschmidt, and Richard Zemel.
\newblock Gated graph sequence neural networks.
\newblock \emph{arXiv preprint arXiv:1511.05493}, 2015.

\bibitem[Liang et~al.(2016)Liang, Shen, Feng, Lin, and Yan]{liang2016semantic}
Xiaodan Liang, Xiaohui Shen, Jiashi Feng, Liang Lin, and Shuicheng Yan.
\newblock Semantic object parsing with graph lstm.
\newblock In \emph{European Conference on Computer Vision}, pp.\  125--143.
  Springer, 2016.

\bibitem[Marcheggiani \& Titov(2017)Marcheggiani and
  Titov]{marcheggiani2017encoding}
Diego Marcheggiani and Ivan Titov.
\newblock Encoding sentences with graph convolutional networks for semantic
  role labeling.
\newblock In \emph{Proceedings of the 2017 Conference on Empirical Methods in
  Natural Language Processing}, pp.\  1506--1515, 2017.

\bibitem[Mnih et~al.(2013)Mnih, Kavukcuoglu, Silver, Graves, Antonoglou,
  Wierstra, and Riedmiller]{mnih2013playing}
Volodymyr Mnih, Koray Kavukcuoglu, David Silver, Alex Graves, Ioannis
  Antonoglou, Daan Wierstra, and Martin Riedmiller.
\newblock Playing atari with deep reinforcement learning.
\newblock \emph{arXiv preprint arXiv:1312.5602}, 2013.

\bibitem[Monti et~al.(2017)Monti, Boscaini, Masci, Rodol{\`a}, Svoboda, and
  Bronstein]{monti2016geometric}
Federico Monti, Davide Boscaini, Jonathan Masci, Emanuele Rodol{\`a}, Jan
  Svoboda, and Michael~M Bronstein.
\newblock Geometric deep learning on graphs and manifolds using mixture model
  cnns.
\newblock In \emph{Computer Vision and Pattern Recognition (CVPR), 2017 IEEE
  Conference on}, pp.\  5425--5434. IEEE, 2017.

\bibitem[Niepert et~al.(2016)Niepert, Ahmed, and Kutzkov]{niepert2016learning}
Mathias Niepert, Mohamed Ahmed, and Konstantin Kutzkov.
\newblock Learning convolutional neural networks for graphs.
\newblock In \emph{International Conference on Machine Learning}, pp.\
  2014--2023, 2016.

\bibitem[Nowak et~al.(2017)Nowak, Villar, Bandeira, and Bruna]{nowak2017note}
Alex Nowak, Soledad Villar, Afonso~S Bandeira, and Joan Bruna.
\newblock A note on learning algorithms for quadratic assignment with graph
  neural networks.
\newblock \emph{arXiv preprint arXiv:1706.07450}, 2017.

\bibitem[{\"O}zg{\"u}r et~al.(2008){\"O}zg{\"u}r, Vu, Erkan, and
  Radev]{ozgur2008identifying}
Arzucan {\"O}zg{\"u}r, Thuy Vu, G{\"u}ne{\c{s}} Erkan, and Dragomir~R Radev.
\newblock Identifying gene-disease associations using centrality on a
  literature mined gene-interaction network.
\newblock \emph{Bioinformatics}, 24\penalty0 (13):\penalty0 i277--i285, 2008.

\bibitem[Perozzi et~al.(2014)Perozzi, Al-Rfou, and Skiena]{perozzi2014deepwalk}
Bryan Perozzi, Rami Al-Rfou, and Steven Skiena.
\newblock Deepwalk: Online learning of social representations.
\newblock In \emph{Proceedings of the 20th ACM SIGKDD international conference
  on Knowledge discovery and data mining}, pp.\  701--710. ACM, 2014.

\bibitem[Quinn et~al.(2011)Quinn, Coleman, Kiyavash, and
  Hatsopoulos]{quinn2011estimating}
Christopher~J Quinn, Todd~P Coleman, Negar Kiyavash, and Nicholas~G
  Hatsopoulos.
\newblock Estimating the directed information to infer causal relationships in
  ensemble neural spike train recordings.
\newblock \emph{Journal of computational neuroscience}, 30\penalty0
  (1):\penalty0 17--44, 2011.

\bibitem[Quinn et~al.(2015)Quinn, Kiyavash, and Coleman]{quinn2015directed}
Christopher~J Quinn, Negar Kiyavash, and Todd~P Coleman.
\newblock Directed information graphs.
\newblock \emph{IEEE Transactions on information theory}, 61\penalty0
  (12):\penalty0 6887--6909, 2015.

\bibitem[Rahimzamani \& Kannan(2016)Rahimzamani and
  Kannan]{rahimzamani2016network}
Arman Rahimzamani and Sreeram Kannan.
\newblock Network inference using directed information: The deterministic
  limit.
\newblock In \emph{Communication, Control, and Computing (Allerton), 2016 54th
  Annual Allerton Conference on}, pp.\  156--163. IEEE, 2016.

\bibitem[Ribeiro et~al.(2016)Ribeiro, Singh, and Guestrin]{ribeiro2016should}
Marco~Tulio Ribeiro, Sameer Singh, and Carlos Guestrin.
\newblock Why should i trust you?: Explaining the predictions of any
  classifier.
\newblock In \emph{Proceedings of the 22nd ACM SIGKDD International Conference
  on Knowledge Discovery and Data Mining}, pp.\  1135--1144. ACM, 2016.

\bibitem[Scarselli et~al.(2009)Scarselli, Gori, Tsoi, Hagenbuchner, and
  Monfardini]{scarselli2009graph}
Franco Scarselli, Marco Gori, Ah~Chung Tsoi, Markus Hagenbuchner, and Gabriele
  Monfardini.
\newblock The graph neural network model.
\newblock \emph{IEEE Transactions on Neural Networks}, 20\penalty0
  (1):\penalty0 61--80, 2009.

\bibitem[Schrijver(2003)]{schrijver2003combinatorial}
Alexander Schrijver.
\newblock \emph{Combinatorial optimization: polyhedra and efficiency},
  volume~24.
\newblock Springer Science \& Business Media, 2003.

\bibitem[Seo et~al.(2016)Seo, Defferrard, Vandergheynst, and
  Bresson]{seo2016structured}
Youngjoo Seo, Micha{\"e}l Defferrard, Pierre Vandergheynst, and Xavier Bresson.
\newblock Structured sequence modeling with graph convolutional recurrent
  networks.
\newblock \emph{arXiv preprint arXiv:1612.07659}, 2016.

\bibitem[Shervashidze et~al.(2011)Shervashidze, Schweitzer, Leeuwen, Mehlhorn,
  and Borgwardt]{shervashidze2011weisfeiler}
Nino Shervashidze, Pascal Schweitzer, Erik Jan~van Leeuwen, Kurt Mehlhorn, and
  Karsten~M Borgwardt.
\newblock Weisfeiler-lehman graph kernels.
\newblock \emph{Journal of Machine Learning Research}, 12\penalty0
  (Sep):\penalty0 2539--2561, 2011.

\bibitem[Shimizu et~al.(2016)Shimizu, Yamaguchi, Saitoh, and
  Masuda]{shimizu2016fast}
Satoshi Shimizu, Kazuaki Yamaguchi, Toshiki Saitoh, and Sumio Masuda.
\newblock A fast heuristic for the minimum weight vertex cover problem.
\newblock In \emph{Computer and Information Science (ICIS), 2016 IEEE/ACIS 15th
  International Conference on}, pp.\  1--5. IEEE, 2016.

\bibitem[Silver et~al.(2016)Silver, Huang, Maddison, Guez, Sifre, Van
  Den~Driessche, Schrittwieser, Antonoglou, Panneershelvam, Lanctot,
  et~al.]{silver2016mastering}
David Silver, Aja Huang, Chris~J Maddison, Arthur Guez, Laurent Sifre, George
  Van Den~Driessche, Julian Schrittwieser, Ioannis Antonoglou, Veda
  Panneershelvam, Marc Lanctot, et~al.
\newblock Mastering the game of go with deep neural networks and tree search.
\newblock \emph{nature}, 529\penalty0 (7587):\penalty0 484--489, 2016.

\bibitem[Such et~al.(2017)Such, Sah, Dominguez, Pillai, Zhang, Michael, Cahill,
  and Ptucha]{such2017robust}
Felipe~Petroski Such, Shagan Sah, Miguel Dominguez, Suhas Pillai, Chao Zhang,
  Andrew Michael, Nathan Cahill, and Raymond Ptucha.
\newblock Robust spatial filtering with graph convolutional neural networks.
\newblock \emph{arXiv preprint arXiv:1703.00792}, 2017.

\bibitem[Tai et~al.(2015)Tai, Socher, and Manning]{tai2015improved}
Kai~Sheng Tai, Richard Socher, and Christopher~D Manning.
\newblock Improved semantic representations from tree-structured long
  short-term memory networks.
\newblock In \emph{Proceedings of the 53rd Annual Meeting of the Association
  for Computational Linguistics and the 7th International Joint Conference on
  Natural Language Processing (Volume 1: Long Papers)}, volume~1, pp.\
  1556--1566, 2015.

\bibitem[Ugander et~al.(2011)Ugander, Karrer, Backstrom, and
  Marlow]{ugander2011anatomy}
Johan Ugander, Brian Karrer, Lars Backstrom, and Cameron Marlow.
\newblock The anatomy of the facebook social graph.
\newblock \emph{arXiv preprint arXiv:1111.4503}, 2011.

\bibitem[Veli{\v{c}}kovi{\'c} et~al.(2017)Veli{\v{c}}kovi{\'c}, Cucurull,
  Casanova, Romero, Li{\`o}, and Bengio]{velivckovic2017graph}
Petar Veli{\v{c}}kovi{\'c}, Guillem Cucurull, Arantxa Casanova, Adriana Romero,
  Pietro Li{\`o}, and Yoshua Bengio.
\newblock Graph attention networks.
\newblock \emph{arXiv preprint arXiv:1710.10903}, 2017.

\bibitem[Vinyals et~al.(2015)Vinyals, Fortunato, and
  Jaitly]{vinyals2015pointer}
Oriol Vinyals, Meire Fortunato, and Navdeep Jaitly.
\newblock Pointer networks.
\newblock In \emph{Advances in Neural Information Processing Systems}, pp.\
  2692--2700, 2015.

\bibitem[Vishwanathan et~al.(2010)Vishwanathan, Schraudolph, Kondor, and
  Borgwardt]{vishwanathan2010graph}
S~Vichy~N Vishwanathan, Nicol~N Schraudolph, Risi Kondor, and Karsten~M
  Borgwardt.
\newblock Graph kernels.
\newblock \emph{Journal of Machine Learning Research}, 11\penalty0
  (Apr):\penalty0 1201--1242, 2010.

\bibitem[Weisfeiler \& Lehman(1968)Weisfeiler and
  Lehman]{weisfeiler1968reduction}
Boris Weisfeiler and AA~Lehman.
\newblock A reduction of a graph to a canonical form and an algebra arising
  during this reduction.
\newblock \emph{Nauchno-Technicheskaya Informatsia}, 2\penalty0 (9):\penalty0
  12--16, 1968.

\bibitem[Williamson \& Shmoys(2011)Williamson and Shmoys]{williamson2011design}
David~P Williamson and David~B Shmoys.
\newblock \emph{The design of approximation algorithms}.
\newblock Cambridge university press, 2011.

\bibitem[Yanardag \& Vishwanathan(2015)Yanardag and
  Vishwanathan]{yanardag2015deep}
Pinar Yanardag and SVN Vishwanathan.
\newblock Deep graph kernels.
\newblock In \emph{Proceedings of the 21th ACM SIGKDD International Conference
  on Knowledge Discovery and Data Mining}, pp.\  1365--1374. ACM, 2015.

\bibitem[Ying et~al.(2018)Ying, He, Chen, Eksombatchai, Hamilton, and
  Leskovec]{ying2018graph}
Rex Ying, Ruining He, Kaifeng Chen, Pong Eksombatchai, William~L Hamilton, and
  Jure Leskovec.
\newblock Graph convolutional neural networks for web-scale recommender
  systems.
\newblock \emph{arXiv preprint arXiv:1806.01973}, 2018.

\end{thebibliography}
\bibliographystyle{iclr2019_conference}

\newpage
\appendix
\section{Background: Graph Convolutional Neural Networks} \label{sec: background}

An ideal graph representation is one that captures all innate structures of the graph relevant to the task at hand, and moreover can also be learned via gradient descent methods.  
However, this is challenging since the relevant structures could range anywhere from local attributes (example: node degrees) to long-range dependencies spanning across a large portion of the graph (example: does there exist a path between two vertices)~\citep{kuhn2016local}.
Such broad scale variation is also a well-known issue in computer vision (image classification, segmentation etc.), wherein convolutional neural network (CNN) designs have been used quite successfully~\citep{krizhevsky2012imagenet}.
Perhaps motivated by this success, recent research has focused on generalizing the traditional CNN architecture to develop designs for graph convolutional neural networks (GCNN)~\citep{bruna2013spectral, niepert2016learning}.
By likening the relationship between adjacent pixels of an image to that of adjacent nodes in a graph, the GCNN seeks to emulate CNNs by defining localized `filters' with shared parameters.

Current GCNN filter designs can be classified into one of two categories: {\em spatial}~\citep{kipf2016semi, dai2017learning, nowak2017note}, and {\em spectral}~\citep{defferrard2016convolutional}.
For an integral hyper-parameter $K\geq 0$, filters in either category process information from a $K$-local neighborhood surrounding a node to compute the output. Here we consider localized spectral filters such as proposed in~\citet{defferrard2016convolutional}.
The difference between the spatial and spectral versions arises in the precise way in which the aggregated local information is combined.

\noindent {\em Spatial GCNN}. For input feature vector $\mathbf{x}_v$ at each node $v\in V$ of a graph $G$, a spatial filtering operation is the following:
\begin{align}
\mathbf{y}_v = \sigma\left( \sum_{k=0}^{K-1} \mathbf{W}_k \left(\sum_{u\in V} (\tilde{\mathbf{A}}^k)_{v,u} \mathbf{x}_u\right) + \mathbf{b}_0 \right) \quad \forall v \in V,  \label{eq: spatial update}
\end{align}
where $\mathbf{y}_v$ is the filter output, ${\bf W}_k, k=1,\ldots,K$ and ${\bf b}_0$ are learnable parameters, and $\sigma$ is a non-linear activation function that is applied element-wise.
$\tilde{\mathbf{A}} = \mathbf{D}^{-1/2}\mathbf{A}\mathbf{D}^{-1/2}$ is the normalized adjacency matrix, and $\mathbf{D}$ is the diagonal matrix of vertex degrees.
Use of un-normalized adjacency matrix is also common. 
The $k$-power of the adjacency matrix selects nodes a distance of at most $k$ hops from $v$.
ReLU is a common choice for $\sigma$.
We highlight two aspects of spatial GCNNs: (i) the feature vectors are aggregated from neighboring nodes directly specified through the graph topology, and (ii) the aggregated features are summarized via an addition operation.

\noindent {\em Spectral GCNN}. Spectral GCNNs  use the notion of graph Fourier transforms to define convolution operation as the inverse transform of multiplicative filtering in the Fourier domain.
Since this is a non-local operation potentially involving data across the entire graph, and moreover it is computationally expensive to compute the transforms, recent work has focused on approximations to produce a local spectral filter of the form
\begin{align}
\mathbf{y}_v = \sigma \left( \sum_{k=0}^{K-1} \mathbf{W}'_k \left( \sum_{u \in V}  (\tilde{{\bf L}}^k)_{v,u} \mathbf{x}_u \right) + \mathbf{b}'_0 \right) \quad \forall v \in V,  \label{eq: spectral update}
\end{align}
where $\tilde{{\bf L}} = \mathbf{I} - \tilde{\mathbf{A}}$ is the normalized Laplacian of the graph, $(\tilde{{\bf L}}^k)_{v,u}$ denotes the entry at the row corresponding to vertex $v$ and column corresponding to vertex $u$ in $\tilde{{\bf L}}^k$, and ${\bf W}'_k, {\bf b}'_0$ are parameters~\citep{defferrard2016convolutional, kipf2016semi}.
As in the spatial case, definitions using unnormalized version of Laplacian matrix are also used. 
$\sigma$ is typically the identity function here.
The function in~\eqref{eq: spectral update} is a local operation because the $k$-th power of the Laplacian, at any row $v$, has a support no larger than the $k$-hop neighborhood of $v$.
Thus, while the aggregation is still localized, the feature vectors are now weighted by the entries of the Laplacian before summation.

\noindent {\em Spectral and Spatial GCNN are equivalent}. The distinction between spatial and spectral convolution designs is typically made owing to their seemingly different definitions.
However we  show that both designs are mathematically equivalent in terms of their representation capabilities.
\begin{prop}
Consider spatial and spectral filters in~\eqref{eq: spatial update} and~\eqref{eq: spectral update}, using 
the same nonlinear activation function $\sigma$ and $K$.
Then, for graph $G(V,E)$, for any choice of parameters ${\bf W}_k$ and ${\bf b}_0$ for $k=1,\ldots,K$ there exists parameters $\mathbf{W}'_k$ and $\mathbf{b}'_0$ such that the filters represent the same transformation function, and vice-versa.
\end{prop}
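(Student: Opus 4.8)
The plan is to exploit the algebraic identity $\tilde{\mathbf{L}} = \mathbf{I} - \tilde{\mathbf{A}}$ to show that the spatial and spectral filters differ only by an invertible reparametrization of their weight matrices, with identical biases. Since the proposition fixes the same element-wise nonlinearity $\sigma$ and the same $K$ for both filters, it suffices to show that the pre-activation arguments (the quantities inside $\sigma$) can be made to coincide at every vertex $v$ simultaneously.

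First I would expand the powers of the normalized Laplacian using the binomial theorem. Because $\mathbf{I}$ and $\tilde{\mathbf{A}}$ commute, $\tilde{\mathbf{L}}^k = (\mathbf{I}-\tilde{\mathbf{A}})^k = \sum_{j=0}^{k} c_{kj}\,\tilde{\mathbf{A}}^j$ with $c_{kj} = \binom{k}{j}(-1)^{k-j}$. Introducing the aggregated neighborhood features $\mathbf{z}^{A}_{k,v} = \sum_{u\in V} (\tilde{\mathbf{A}}^k)_{v,u}\mathbf{x}_u$ and $\mathbf{z}^{L}_{k,v} = \sum_{u\in V} (\tilde{\mathbf{L}}^k)_{v,u}\mathbf{x}_u$, this matrix identity holds entrywise and independently of the vertex, so $\mathbf{z}^{L}_{k,v} = \sum_{j=0}^{k} c_{kj}\,\mathbf{z}^{A}_{j,v}$ for every $v\in V$.

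Next I would substitute this expansion into the spectral pre-activation $\sum_{k=0}^{K-1} \mathbf{W}'_k \mathbf{z}^{L}_{k,v} + \mathbf{b}'_0$ and collect terms by $\mathbf{z}^{A}_{j,v}$, obtaining $\sum_{j=0}^{K-1} \bigl(\sum_{k=j}^{K-1} c_{kj}\mathbf{W}'_k\bigr)\mathbf{z}^{A}_{j,v} + \mathbf{b}'_0$. Matching this against the spatial pre-activation $\sum_{j=0}^{K-1} \mathbf{W}_j \mathbf{z}^{A}_{j,v} + \mathbf{b}_0$ yields the bias identity $\mathbf{b}'_0 = \mathbf{b}_0$ together with the linear system $\mathbf{W}_j = \sum_{k=j}^{K-1} c_{kj}\mathbf{W}'_k$. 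This system is triangular in the index $k$ with diagonal coefficients $c_{jj}=1$, so it is invertible and is solved by back-substitution (starting from $\mathbf{W}'_{K-1}=\mathbf{W}_{K-1}$), producing the required spectral parameters from any given spatial parameters. For the reverse direction the same argument applies verbatim after writing $\tilde{\mathbf{A}} = \mathbf{I}-\tilde{\mathbf{L}}$ and expanding $\tilde{\mathbf{A}}^k$ in powers of $\tilde{\mathbf{L}}$.

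The computation is routine once the change of basis is set up; the only point requiring care is confirming that this reparametrization is genuinely invertible rather than merely some linear map between parameter sets. That is guaranteed by the triangular structure with unit diagonal, and crucially the coefficients $c_{kj}$ depend only on $k$ and $j$, not on the graph $G$ or the spectrum of $\tilde{\mathbf{A}}$. Hence the equivalence holds uniformly over all graphs and all parameter choices, as claimed.
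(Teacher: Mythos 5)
Your proposal is correct and takes essentially the same route as the paper's proof: both exploit $\tilde{\mathbf{L}} = \mathbf{I} - \tilde{\mathbf{A}}$, expand $\tilde{\mathbf{L}}^k$ by the binomial theorem, match coefficients of the powers of $\tilde{\mathbf{A}}$ inside the (shared) nonlinearity with $\mathbf{b}'_0 = \mathbf{b}_0$, and solve the resulting triangular system by back-substitution starting from $\mathbf{W}'_{K-1} = \mathbf{W}_{K-1}$, handling the converse symmetrically via $\tilde{\mathbf{A}} = \mathbf{I} - \tilde{\mathbf{L}}$. Your explicit remarks that the triangular system has unit diagonal and that the coefficients $c_{kj}$ are graph-independent merely make precise what the paper's back-substitution uses implicitly.
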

\begin{proof}
Consider a vertex set $V = \{1,2,\ldots,n\}$ and $d$-dimensional vertex states $\mathbf{x}_i$ and $\mathbf{y}_i$ at vertex $i\in V$.
Let $\mathbf{X} = [\mathbf{x}_1,\ldots,\mathbf{x}_n]$ and $\mathbf{Y} = [\mathbf{y}_1,\ldots,\mathbf{y}_n]$ be the matrices obtained by concatenating the state vectors of all vertices.
Then the spatial transformation function of~\eqref{eq: spatial update} can be written as
\begin{align}
\mathbf{Y} = \sigma\left( \sum_{k=0}^{K-1} \mathbf{W}_k \mathbf{X} \tilde{\mathbf{A}}^k + \mathbf{b}_0 \mathbf{1}^T \right), \label{eq: spatial mat version}
\end{align}
while the spectral transformation function of~\eqref{eq: spectral update} can be written as
\begin{align}
\mathbf{Y} &= \sigma \left( \sum_{k=0}^{K-1} \mathbf{W}'_k \mathbf{X} \tilde{\mathbf{L}}^k + \mathbf{b}'_0 \mathbf{1}^T \right) \\
&=  \sigma \left( \sum_{k=0}^{K-1} \mathbf{W}'_k \mathbf{X} (\mathbf{I} - \tilde{\mathbf{A}})^k + \mathbf{b}'_0 \mathbf{1}^T \right) \label{eq: lap defn} \\
&=  \sigma \left( \sum_{k=0}^{K-1} \mathbf{W}'_k \mathbf{X} \sum_{i=0}^k \binom{k}{i}(-1)^{k-i} \tilde{\mathbf{A}}^i + \mathbf{b}'_0 \mathbf{1}^T \right) \label{eq: mat binom thm} \\
&= \sigma \left( \sum_{k=0}^{K-1} \left( \sum_{i=k}^{K-1} \mathbf{W}'_i \binom{i}{k}(-1)^{i-k} \right) \mathbf{X} \tilde{\mathbf{A}}^k +  \mathbf{b}'_0 \mathbf{1}^T  \right). \label{eq: spectral mat version}
\end{align}
The~\eqref{eq: lap defn} follows by the definition of the normalized Laplacian matrix, and~\eqref{eq: mat binom thm} derives from binomial expansion.
To make the transformation in~\eqref{eq: spatial mat version} and~\eqref{eq: spectral mat version} equal, we can set
\begin{align}
\sum_{i=k}^{K-1} \mathbf{W}'_i \binom{i}{k}(-1)^{i-k} = \mathbf{W}_k, \quad \forall ~0\leq k\leq K-1,
\end{align}
and check if there are any feasible solutions for the primed quantities.
Clearly there are, with one possible solution being  $\mathbf{b}'_0 = \mathbf{b}_0$ and
\begin{align}
\mathbf{W}'_{K-1} &= \mathbf{W}_{K-1} \\
\mathbf{W}'_{k} &= \mathbf{W}_k - \sum_{i=k+1}^{K-1} \mathbf{W}'_i \binom{i}{k} (-1)^{i-k},  
\end{align}
$\forall ~0 \leq k \leq K-2.$
Thus for any choice of values for $\mathbf{W}_k, \mathbf{b}_0$ for $k=0,\ldots, K-1$ there exists $\mathbf{W}'_k, \mathbf{b}'_0$ for $k=0,\ldots,K-1$ such that the spatial and spectral transformation functions are equivalent.
The other direction (when $\mathbf{W}'_k$ and $\mathbf{b}_0$ are fixed), is similar and straightforward.
\end{proof}

Depending on the application, the convolutional layers may be supplemented with pooling and coarsening layers that summarize outputs of nearby convolutional filters to form a progressively more compact spatial representation of the graph.
This is useful in classification tasks where the desired output is one out of a few possible classes~\citep{bruna2013spectral}.
For applications requiring decisions at a per-node level (e.g. community detection), a popular strategy is to have multiple repeated convolutional layers that compute vector representations for each node, which are then processed to make a decision~\citep{dai2017learning, bruna2017community, nowak2017note}.
The conventional wisdom here is to have as many layers as the diameter of the graph, since filters at each layer aggregate information only from nearby nodes.
Such a strategy is sometimes compared to the message passing algorithm~\citep{gilmer2017neural}, though the formal connections are not clear as noted in~\citet{nowak2017note}.
Finally the GCNNs described so far are all end-to-end differentiable and can be trained using mainstream techniques for supervised, semi-supervised or reinforcement learning applications.

Other lines of work use ideas inspired from word embeddings for graph representation~\citep{grover2016node2vec, perozzi2014deepwalk}. Post-GCNN representation,  LSTM-RNNs have been used to analyze time-series data structured over a graph.
\citet{seo2016structured} propose a model which combines GCNN and RNN to predict moving MNIST data.
~\citet{liang2016semantic} design a graph LSTM for semantic object parsing in images.

\section{Section~\ref{sec: rep graphs as ts} Proofs}

\subsection{Proof of Theorem~\ref{thm: g2s equals s2g}} \label{sec: g2s equals s2g proof}
\begin{proof}
Consider a \algo~trajectory on graph $G(V,E)$ according to~\eqref{eq: basic filter} in which the vertex states are initialized randomly from some distribution.
Let $\mathbf{X}_v(t)$ (resp.\ $\mathbf{x}_v(t)$) denote the random variable (resp.\ realization) corresponding to the state of vertex $v$ at time $t$.
For time $T>0$ and a set $S\subseteq V$, let $\mathbf{X}^T_S$ denote the collection of random variables $ \{ \mathbf{X}_v(t): v\in S, 0 \leq t \leq T \}$; $\mathbf{x}_V^T$ will denote the realizations.

An information theoretic estimator to output the graph structure by looking at the trajectory $\mathbf{X}_V^T$ is the directed information graph considered in~\citet{quinn2015directed}.
Roughly speaking, the estimator evaluates the conditional directed information for every pair of vertices $u,v \in V$, and declares an edge only if it is positive (see Definition 3.4 in~\citet{quinn2015directed} for details).  Estimating  conditional directed information efficiently from samples is itself an active area of research \cite{quinn2011estimating}, but simple plug-in estimators with a standard kernel density estimator will be consistent. Since the theorem statement did not specify sample efficiency (i.e., how far down the trajectory do we have to go before estimating the graph with a required probability), the inference algorithm is simple and polynomial in the length of the trajectory. The key question is whether the directed information graph is indeed the same as the underlying graph $G$.
Under some conditions on the graph dynamics (discussed below in Properties~\ref{prop: one}--\ref{prop: three}), this holds and it suffices for us to show that the dynamics generated according to~\eqref{eq: basic filter} satisfies those conditions.

\begin{ppt} \label{prop: one}
For any $T>0$, $P_{\mathbf{X}_V^T}(\mathbf{x}_V^T) > 0$ for all $\mathbf{x}_V^T$.
\end{ppt}
 This is a technical condition that is required to avoid degeneracies that may arise in deterministic systems.
 Clearly \algo's dynamics satisfies this property due to the additive i.i.d.\ noise in the transformation functions.

\begin{ppt} \label{prop: two}
The dynamics is strictly causal, that is $P_{\mathbf{X}_V^T}(\mathbf{x}_V^T)$ factorizes as $\prod_{t=0}^T \prod_{v\in V} P_{\mathbf{X}_v(t)|\mathbf{X}^{t-1}_{V}}(\mathbf{x}_v(t) | \mathbf{x}_V^{t-1})$.
\end{ppt}
This is another technical condition that is readily seen to be true for \algo.
The proof also follows from Lemma 3.1 in~\citet{quinn2015directed}.

\begin{ppt} \label{prop: three}
$G$ is the minimal generative model graph for the random processes $\mathbf{X}_v(t), v\in V$.
\end{ppt}
Notice that the transformation operation~\eqref{eq: basic filter} in our graph causes $\mathbf{X}_V^T$ to factorize as
\begin{align}
P_{\mathbf{X}_V^T}(\mathbf{x}_V^T) = \prod_{t=0}^T \prod_{v\in V} P_{\mathbf{X}_v(t)|\mathbf{X}^{t-1}_{\Gamma(v)}}(\mathbf{x}_v(t) | \mathbf{x}_{\Gamma(v)}^{t-1}) \label{eq: G factor}
\end{align}
for any $T>0$, where $\Gamma(v)$ is the set of neighboring vertices of $v$ in $G$.
Now consider any other graph $G'(V, E')$.
$G'$ will be called a minimal generative model for the random processes $\{\mathbf{X}_v(t): v\in V, t \geq 0 \}$ if \\
(1) there exists an alternative factorization of $P_{\mathbf{X}_V^T}(\mathbf{x}_V^T)$ as
\begin{align}
P_{\mathbf{X}_V^T}(\mathbf{x}_V^T) = \prod_{t=0}^T \prod_{v\in V} P_{\mathbf{X}_v(t)|\mathbf{X}^{t-1}_{\Gamma'(v)}}(\mathbf{x}_v(t) | \mathbf{x}_{\Gamma'(v)}^{t-1}) \label{eq: G prime factor}
\end{align}
for any $T>0$, where $\Gamma'(v)$ is the set of neighbors of $v$ in $G'$, and  \\
(2) there does not exist any other graph $G''(V, E'')$ with $E'' \subset E$ and a factorization of  $P_{\mathbf{X}_V^T}(\mathbf{x}_V^T)$ as  $\prod_{t=0}^T \prod_{v\in V} P_{\mathbf{X}_v(t)|\mathbf{X}^{t-1}_{\Gamma''(v)}}(\mathbf{x}_v(t) | \mathbf{x}_{\Gamma''(v)}^{t-1})$ for any $T>0$, where $\Gamma''(v)$ is the set of neighbors of $v$ in $G''$.

Intuitively, a minimal generative model is the smallest spanning graph that can generate the observed dynamics.
To show that $G(V, E)$ is indeed a minimal generative model, let us suppose the contrary and assume there exists another graph $G'(V, E')$ with $E' \subset E$ and a factorization of $P_{\mathbf{X}_V^T}(\mathbf{x}_V^T)$  as in~\eqref{eq: G prime factor}.
In particular, let $v$ be any node such that $\Gamma'(v) \subset \Gamma(v)$.
Then by marginalizing the right hand sides of~\eqref{eq: G factor} and~\eqref{eq: G prime factor}, we get
\begin{align}
P_{\mathbf{X}_v(1)|\mathbf{X}^{0}_{\Gamma(v)}}(\mathbf{x}_v(1) | \mathbf{x}_{\Gamma(v)}^{0}) = P_{\mathbf{X}_v(1)|\mathbf{X}^{0}_{\Gamma'(v)}}(\mathbf{x}_v(1) | \mathbf{x}_{\Gamma'(v)}^{0}). \label{eq: ad absurdum eq}
\end{align}
Note that~\eqref{eq: ad absurdum eq} needs to hold for all possible realizations of the random variables $\mathbf{X}_v(1), \mathbf{X}_{\Gamma(v)}^0$ and $\mathbf{X}_{\Gamma'(0)}^0$.
However if the parameters $\mathbf{\Theta}_0$ and $\Theta_1$ in~\eqref{eq: basic filter}  are generic, this is clearly not true.
To see this, let $u \in \Gamma(v)\backslash \Gamma'(v)$ be any vertex.
By fixing the values of $\mathbf{x}_v(1), \mathbf{x}_{\Gamma(v)\backslash \{ u\}}^0$ it is possible to find two values for $\mathbf{x}_u(0)$, say $\mathbf{a_1}$ and $\mathbf{a}_2$, such that
 \begin{align}
  \mathrm{ReLU}\left( \mathbf{\Theta}_0 \left(\sum_{ i \in \Gamma(v)\backslash \{u\} } \mathbf{x}_i(0) + \mathbf{a}_1 \right) + \Theta_1 \right) \notag \\
  \neq  \mathrm{ReLU}\left( \mathbf{\Theta}_0 \left(\sum_{ i \in \Gamma(v)\backslash \{u\} } \mathbf{x}_i(0) + \mathbf{a}_2 \right) + \Theta_1 \right).
 \end{align}
 As such the Gaussian distributions in these two cases will have different means.
 However the right hand side Equation~\eqref{eq: ad absurdum eq} does not depend on $\mathbf{x}_u(0)$ at all, resulting in a contradiction.
 Thus $G$ is a minimal generating function of $\{\mathbf{X}_v(t): v\in V, t\geq 0 \}$.
 Thus Property~\ref{prop: three} holds as well.
Now the result follows from the following Theorem.

\begin{thm}[Theorem 3.6,~\citet{quinn2015directed}]
If Properties~\ref{prop: one},~\ref{prop: two} and~\ref{prop: three} are satisfied, then the directed information graph is equivalent to the graph $G$.
\end{thm}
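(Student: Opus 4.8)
The goal is to show the directed information graph and $G$ have identical edge sets, so the plan is to translate the defining condition of the directed information graph---strict positivity of the causally-conditioned directed information between a pair of vertices---into a conditional-independence statement about the time-series, and then read the edges off the parent sets $\Gamma(v)$. First I would use Property~\ref{prop: two} to expand, for any ordered pair $u,v$,
\[
I\!\left(\mathbf{X}_u \to \mathbf{X}_v \,\middle\|\, \mathbf{X}_{V\setminus\{u,v\}}\right) = \sum_{t\geq 1} I\!\left(\mathbf{X}_u^{t-1};\, \mathbf{X}_v(t) \,\middle|\, \mathbf{X}_{V\setminus\{u\}}^{t-1}\right),
\]
a sum of conditional mutual informations in which strict causality ensures that only strict pasts $\mathbf{X}_\cdot^{t-1}$ enter each term. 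Since every summand is nonnegative and a conditional mutual information vanishes exactly when the associated conditional independence holds, the directed information is zero iff $\mathbf{X}_v(t)$ is conditionally independent of $\mathbf{X}_u^{t-1}$ given the pasts of all other vertices, for every $t$. Thus $(u,v)$ is an edge of the directed information graph iff this conditional independence fails at some $t$.

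With this characterization I would prove the two inclusions separately. For $\text{DIG}\subseteq G$, suppose $u\notin\Gamma(v)$; the factorization~\eqref{eq: G factor} gives $P_{\mathbf{X}_v(t)\mid\mathbf{X}_V^{t-1}} = P_{\mathbf{X}_v(t)\mid\mathbf{X}_{\Gamma(v)}^{t-1}}$, which is precisely the conditional independence of $\mathbf{X}_v(t)$ from $\mathbf{X}_u^{t-1}$ given the remaining pasts. Hence every summand above is zero, the directed information is zero, and $(u,v)$ is not an edge; contrapositively the directed information graph is a subgraph of $G$.

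For $G\subseteq\text{DIG}$, suppose toward a contradiction that $u\in\Gamma(v)$ yet the directed information vanishes. Then for every $t$ I have both $\mathbf{X}_v(t)\perp\mathbf{X}_u^{t-1}\mid\mathbf{X}_{V\setminus\{u\}}^{t-1}$ (from the vanishing directed information) and $\mathbf{X}_v(t)\perp\mathbf{X}_{V\setminus(\Gamma(v)\cup\{v\})}^{t-1}\mid\mathbf{X}_{\Gamma(v)}^{t-1}$ (from~\eqref{eq: G factor}). Combining these via the intersection property of conditional independence---valid precisely because of the strict positivity guaranteed by Property~\ref{prop: one}---yields $\mathbf{X}_v(t)\perp\mathbf{X}_u^{t-1}\mid\mathbf{X}_{\Gamma(v)\setminus\{u\}}^{t-1}$. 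This lets me drop $u$ from $v$'s parent set and factor $P_{\mathbf{X}_V^T}$ exactly as in~\eqref{eq: G prime factor} with edge set $E''=E\setminus\{(u,v)\}\subsetneq E$, contradicting the minimality clause of Property~\ref{prop: three}. Hence $(u,v)\in\text{DIG}$, giving $G\subseteq\text{DIG}$, and therefore $G=\text{DIG}$.

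I expect the $G\subseteq\text{DIG}$ direction to be the main obstacle, specifically the deduction that $u$ can be removed from $\Gamma(v)$: a conditional independence conditioned on the large set $V\setminus\{u\}$ does not in general collapse to one conditioned on the small set $\Gamma(v)\setminus\{u\}$, and this local-to-global reduction is exactly where positivity is indispensable, since the intersection axiom fails for degenerate distributions. Care is also needed to keep every conditioning set restricted to strict pasts, which is what Property~\ref{prop: two} secures; this is what guarantees that the per-time conditional independences assemble into a genuine smaller generative model rather than a statement holding only at a single time step.
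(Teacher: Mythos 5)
This theorem is not proved in the paper at all---it is imported verbatim as Theorem~3.6 of \citet{quinn2015directed} and used as a black box in the proof of Theorem~\ref{thm: g2s equals s2g}---so there is no in-paper proof to compare against; your blind reconstruction is correct and follows essentially the same route as the cited original. Your decomposition of the causally conditioned directed information into per-time conditional mutual informations, the easy inclusion via weak union applied to the factorization~\eqref{eq: G factor}, and the converse inclusion via the intersection property of conditional independence (valid exactly because of the positivity in Property~\ref{prop: one}) leading to a contradiction with the minimality clause of Property~\ref{prop: three}, mirror the structure of the argument in \citet{quinn2015directed}, and you correctly identify both places where Properties~\ref{prop: one} and~\ref{prop: two} are indispensable.
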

\end{proof}

\subsection{Proof of Proposition~\ref{prop: deterministic rule}} \label{sec: deterministic rule proof}
\begin{proof}
Consider 4-regular graphs $R_1$ and $R_2$ with vertices $\{0,1,\ldots,7\}$ and edges $\{(0,3), (0,5), (0,6), (0,7), (1,2), (1,4), (1,6), (1,7), \allowbreak (2,3), (2,5), (2,6), (3,4), (3,5), (4,5), \allowbreak (4,7), (6,7)\}$ and $\{(0,1), (0,2), (0,4), (0,7), (1,4), (1,5), (1,6), (2,3), \allowbreak (2,4), (2,7), (3,5), (3,6), \allowbreak (3,7),  (4,6), (5,6), (5,7) \}$  respectively.
Then under a deterministic evolution rule, since $R_1$ and $R_2$ are 4-regular graphs, the trajectory will be identical at all nodes across the two graphs.
However the graphs $R_1$ and $R_2$ are structurally different. 
For e.g., $R_1$ has a minimum vertex cover size of 5, while for $R_2$ it is 6.
As such, if any one of the graphs ($R_1$, say) is provided as input to be represented, then from the representation it is impossible to exaclty recover $R_1$'s structure. 
\end{proof}

\subsection{Proof of Proposition~\ref{ex: gcnn belong to model}} \label{sec: gcnn belong to model proof}
\begin{proof}
\citet{kipf2016semi} use a two layer graph convolutional network, in which each layer uses convolutional filters that aggregate information from the immediate neighborhood of the vertices.
This corresponds to a 2-local representation function $r(\cdot)$ in our computational model.
Following this step, the values at the vertices are aggregated using softmax to compute a probability score at each vertex.
Since this procedure is independent of the structure of the input graph, it is a valid gathering function $g(\cdot)$ in \model~and the overall architecture belongs to a 2-\model~model.

Similarly,~\citet{dai2017learning} also consider convolutional layers in which the neurons have a spatial locality of one.
Four such convolutional layers are cascaded together, the outputs of which are then processed by a separate $Q$-learning network.
Such a neural architecture is an instance of the 4-\model~model.
\end{proof}

\begin{figure}[t]
  \centerline{
     \subfigure[]{\includegraphics[height=30mm]{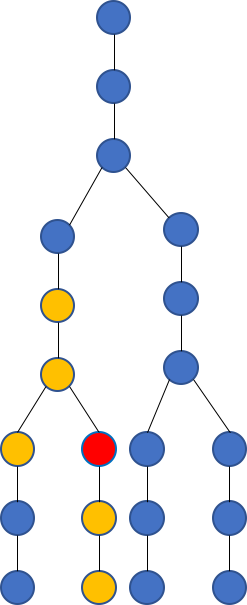}
    \label{fig: tree1}}
    \hfil
     \subfigure[]{\includegraphics[height=40mm]{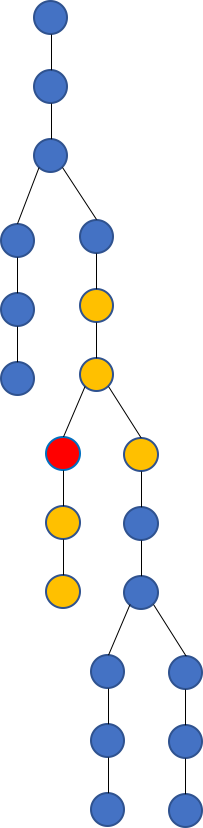}
    \label{fig: tree2}}}
    \caption{Example to illustrate $k$-\model~algorithms are insufficient for computing certain functions. Corresponding vertices in the two trees above have similar local neighborhoods, but the trees have minimum vertex cover of different sizes. }
    \label{fig: arch and tree egs}
\end{figure}

\subsection{Proof of Theorem~\ref{thm: fixed length insuf}} \label{sec: fixed length insuf proof}
\begin{proof}
Consider a family $\mathcal{G}$ of undirected, unweights graphs.
Let $f:\mathcal{G}\rightarrow \mathbb{Z}$ denote a function that computes the size of the minimum vertex cover of graphs from $\mathcal{G}$.
For $k>0$ fixed, let $\mathtt{ALG}$ denote any algorithm from the $k$-\model~model, with a representation function $r_\mathtt{ALG}(\cdot)$ and aggregating function $g_\mathtt{ALG}(\cdot)$.\footnote{See beginning of Section~\ref{sec: rep graphs as ts} for explanations of $r(\cdot)$ and $g(\cdot)$.}
We present two graphs $G_1$ and $G_2$ such that $f(G_1) \neq f(G_2)$, but the set of computed states $\{r_\mathtt{ALG}(v): v \in G_i\}$ is the same for both the graphs ($i=1,2$).
Now, since the gather function $g_\mathtt{ALG}(\cdot)$ operates only on the set of computed states (by definition of our model), this implies $\mathtt{ALG}$ cannot distinguish between $f(G_1)$ and $f(G_2)$, thus proving our claim.

For simplicity, we fix $k=2$ (the example easily generalizes for larger $k$).
We consider the graphs $G_1$ and $G_2$ as shown in Fig.~\ref{fig: tree1} and~\ref{fig: tree2} respectively.
To construct these graphs, we first consider binary trees $B_1$ and $B_2$ each having 7 nodes.
$B_1$ is a completely balanced binary tree with a depth of 2, whereas $B_2$ is a completely imbalanced binary tree with a depth of 3.
Now, to get $G_1$ and $G_2$, we replace each node in $B_1$ and $B_2$ by a chain of 3 nodes (more generally, by an odd number of nodes larger than $k$).
At each location in $B_i$ ($i=1,2$), the head of the chain of nodes connects to the tail of the parent's chain of nodes, as shown in Fig.~\ref{fig: arch and tree egs}.

The sizes of the minimum vertex cover of $G_1$ and $G_2$ are 9 and 10 respectively.
However, there exists a one-to-one mapping between the vertices of $G_1$ and the vertices of $G_2$ such that the $2$-hop neighborhood around corresponding vertices in $G_1$ and $G_2$ are the same.
For example, in Fig.~\ref{fig: tree1} and~\ref{fig: tree2} the pair of nodes shaded in red have an identical 2-hop neighborhood (shaded in yellow).
As such, the representation function $r_\mathtt{ALG}(\cdot)$ -- which for any node depends only on its $k$-hop neighborhood -- will be the same for corresponding pairs of nodes in $G_1$ and $G_2$.

Finally, the precise mapping between pairs of nodes in $G_1$ and $G_2$ is obtained as follows.
First consider a simple mapping between pairs of nodes in $B_1$ and $B_2$ in which (i) the 4 leaf nodes in $B_1$ are mapped to the leaf nodes in $B_2$, (ii) the root of $B_1$ is mapped to the root of $B_2$ and (iii) the 2 interior nodes of $B_1$ are mapped to the interior nodes of $B_2$.
We generalize this mapping to $G_1$ and $G_2$ in two steps: (1) mapping chains of nodes in $G_1$ to chains of nodes in $G_2$, according to the $B_1-B_2$ map, and (2) within corresponding chains of nodes, we map nodes according to order (head-to-head, tail-to-tail, etc.).
\end{proof}

\subsection{Sequential Heuristic to Compute MVC on Trees} \label{sec: seq heur trees}

Consider any unweighted, undirected tree $T$.
We let the state at any node $v\in T$ be represented by a two-dimensional vector $[x_v, y_v]$.
For any $v\in T$, $x_v$ takes values over the set $\{ -\epsilon, +1 \}$ while $y_v$ is in $\{-1, 0, \epsilon \}$.
Here $\epsilon$ is a parameter that we choose to be less than one over the maximum degree of the graph.
Semantically $x_v$ stands for whether vertex $v$ is `active' ($x_v=+1$) or `inactive' ($x_v=-\epsilon$).
Similarly $y_v$ stands for whether $v$ has been selected to be part of the vertex cover ($y_v = +\epsilon$), has not been selected to be part of the cover ($y_v = -1$), or a decision has not yet been made ($y_v = 0$).
Initially $x_v = -\epsilon$ and $y_v = 0$ for all vertices.
The heuristic proceeds in rounds, wherein at each round any vertex $v$ updates its state $[x_v, y_v]$ based on the state of its neighbors as shown in Algorithm~\ref{algo:seq heuristic}.

\begin{algorithm}[t]
   \caption{Sequential heuristic to compute minimum vertex cover on a tree.}
   \label{algo:seq heuristic}
\begin{algorithmic}
   \STATE {\bfseries Input:} Undirected, unweighted tree $T$; Number of rounds {\tt NumRounds}
   \STATE {\bfseries Output:} Size of minimum vertex cover on $T$
   \STATE $x_v(0) \leftarrow -\epsilon$ for all $v\in T$ \COMMENT{$x_v(i)$ is $x_v$ at round $i$}
   \STATE $y_v(0) \leftarrow 0$ for all $v\in T$ \COMMENT{$y_v(i)$ is $y_v$ at round $i$}
   \STATE \slash\slash ~Computing the representation $r(v)$ for each $v\in T$
   \FOR{i from 1 {\bfseries to} {\tt NumRounds}}
   	\STATE At each vertex $v$:
   		\IF{$\sum_{u\in\Gamma(v)} x_u(i-1) \geq -\epsilon $}
   			\STATE $x_v(i) \leftarrow +1$
			\IF{$\sum_{u\in\Gamma(v)}y_u(i-1) < 0$}
				\STATE $y_v(i) \leftarrow \epsilon$
			\ELSE
				\STATE $y_v(i) \leftarrow -1$
			\ENDIF
   		\ELSE
   			\STATE $x_v(i) \leftarrow -\epsilon$ and $y_v(i) \leftarrow 0$
   		\ENDIF
   \ENDFOR
   \STATE \slash\slash ~Computing the aggregating function $g(\{r(v): v\in T\})$
   \STATE $\bar{y}_v \leftarrow \left( \sum_{i=1}^{\mathtt{NumRounds}} (y_v(i) + 1)/(1+\epsilon) \right)/\mathtt{NumRounds} $
   \STATE {\bf return} $\sum_{v \in T} \bar{y}_v $
\end{algorithmic}
\end{algorithm}


The update rules at vertex $v$ are (1) if $v$ is a leaf or if at least one of $v$'s neighbors are active, then $v$ becomes active; (2) if $v$ is active, and if at least one of $v$'s active neighbors have not been chosen in the cover, then $v$ is chosen to be in the cover; (3) if all of $v$'s neighbors are inactive, then $v$ remains inactive and no decision is made on $y_v$.

At the end of the local computation rounds, the final vertex cover size is computed by first averaging the $y_v$ time-series at each $v\in T$ (with translation, and scaling as shown in Algorithm~\ref{algo:seq heuristic}), and then summing over all vertices.

\section{Section~\ref{sec: Neural Network Design} Details}
\label{sec: net arch appendix}

\begin{algorithm}[t]
   \caption{Testing procedure of \algo~on a graph instance.}
   \label{algo: graph2seq testing}
\begin{algorithmic}
   \STATE {\bfseries Input:} graph $G$, trained parameters, objective $f:G\rightarrow \mathbb{R}$ we seek to maximize, maximum sequence length $T_\mathrm{max}$
   \STATE {\bfseries Output:} solution set $S\subseteq V$
   \STATE $S_\mathrm{opt} \leftarrow \{ \}$, $v_\mathrm{opt} \leftarrow 0$ \COMMENT{initialize}
   \FOR{$T$ from $1$ {\bfseries to} $T_\mathrm{max}$}
   	\STATE $S \leftarrow $ solution returned by \algo($T$)
   		\IF{$f(S) > v_\mathrm{opt}$}
   			\STATE $S_\mathrm{opt} \leftarrow S$
			\STATE $v_\mathrm{opt} \leftarrow f(S)$
   		\ENDIF
   \ENDFOR
   \STATE {\bf return} $S_\mathrm{opt}$
\end{algorithmic}
\end{algorithm}


\subsection{Reinforcement Learning Formulation} \label{sec: learning algorithm}
Let $G(V,E)$ be an input graph instance for the optimization problems mentioned above. 
Note that the solution to each of these problems can be represented by a set $S \subseteq V$. 
In the case of the minimum vertex cover (MVC) and maximum independent set (MIS), the set denotes the desired optimal cover and independent set respectively; 
for max cut (MC) we let $(S, S^c)$ denote the optimal cut. 
For the following let  $f:2^V \rightarrow \mathbb{R}$ be the objective function of the problem (i.e., MVC, MC or MIS) that we want to maximize, and let $\mathcal{F} \subseteq 2^V$ be the set of feasible solutions. 

{\bf Dynamic programming formulation.} Now, consider a dynamic programming heuristic in which the subproblems are defined by the pair $(G, S)$, where $G$ is the graph and $S\subseteq V$ is a subset of vertices that have already been included in the solution.  
For a vertex $a\in V \backslash S$ let $ Q(G,S, a) = \max_{S' \supseteq S\cup\{a\}, S' \in \mathcal{F}}f(S') - f(S\cup \{a\}) $ denote the marginal utility gained by selecting vertex $a$. 
Such a $Q$-{\em function} satisfies the Bellman equations given by
\begin{align}
Q(G,S,a) = &f(S\cup\{a\}) - f(S) \notag \\
&+ \max_{a' \in V \backslash S\cup\{a\} }Q(G, S\cup\{a\}, a'). \label{eq: Bellman equations}
\end{align}
It is easily seen that computing the $Q$-functions solves the optimization problem, as $\max_{S \in \mathcal{F}} f(S) = \max_{a\in V} Q(G, \{\}, a)$.   
However exactly computing $Q$-functions may be computationally expensive. 
One approach towards approximately computing $Q(G,S,a)$ is to fit it to a (polynomial time computable) parametrized function, in a way that an appropriately defined error metric is minimized. 
This approach is called $Q$-{\em learning} in the reinforcement learning (RL) paradigm, and is described below. 

{\bf State, action and reward.} We consider a reinforcement learning policy in which the solution set $S\subseteq V$ is generated one vertex at a time. 
The algorithm proceeds in rounds, where at round $t$ the RL agent is presented with the graph $G$ and the set of vertices $S_t$ chosen so far. 
Based on this {\em state} information, the RL agent outputs an {\em action} $A_t \in  V \backslash S_t$. 
The set of selected vertices is updated as $S_{t+1} = S_t \cup \{A_t\}$. 
Initially $S_0 = \{ \}$. 
Every time the RL agent performs an action $A_t$ it also incurs a {\em reward} $R_t = f(S_t\cup \{A_t\}) - f(S_t)$.
Note that the $Q$-function $Q(G, S_t, a)$ is well-defined only if $S_t$ and $a$ are such that there exists an $S' \supseteq S_t \cup \{a\}$ and $S' \in \mathcal{F}$. 
To enforce this let $\mathcal{F}_t = \{ a \in V \backslash S_t: \exists~ S' \text{ s.t. } S' \supseteq S_t \cup \{a\} \text{ and } S' \in \mathcal{F} \} $ denote the set of  feasible actions at time $t$.
Each round, the learning agent chooses an action $A_t \in \mathcal{F}_t$. 
The algorithm terminates when $\mathcal{F}_t = \{ \}$.  

{\bf Policy.} 
The  goal of the RL agent is to learn a {\em policy} for selecting actions at each time, such that the cumulative reward incurred $\sum_{t\geq 0} R_t$ is maximized. 
A measure of the generalization capability of the policy is how well it is able to maximize cumulative reward for different graph instances from a collection (or from a  distribution) of interest. 
 
{\bf $Q$-learning.}
Let $\tilde{Q}(G, S, a; \Theta)$ denote the approximation of $Q(G,S,a)$ obtained using a parametrized function with parameters $\Theta$. 
Further let $\left( (G_i, S_i, a_i) \right)_{i=1}^N$ denote a sequence of (state, action) tuples available as training examples. 
We define empirical loss as 
\begin{align}
\hat{L} = &\sum_{i=1}^N\left( \tilde{Q}(G_i,S_i,a_i; \Theta) - f(S_i\cup\{a_i\}) + f(S_i) \right. \notag \\
&\left. - \max_{a' \in V \backslash S_i\cup\{a_i\} } \tilde{Q}(G_i, S_i\cup\{a_i\}, a'; \Theta) \right)^2, \label{eq: emp loss}
\end{align}
and minimize using stochastic gradient descent. 
The solution of the Bellman equations~\eqref{eq: Bellman equations} is a stationary point for this optimization. 

{\bf Remark.} 
Heuristics such as ours, which select vertices one at a time in an irreversible fashion are studied as `priority greedy' algorithms in computer science literature~\citep{borodin2003incremental, angelopoulos2003randomized}.
The fundamental limits (worst-case) of priority greedy algorithms for minimum vertex cover and maximum independent set has been discussed in~\citet{borodin2010priority}. 

\subsection{Seq2Vec Update Equations}

{\bf Seq2Vec.}
The sequence $\left( \{\mathbf{x}_v(t): v \in V\} \right)^T_{t=1}$ is processed by a  gated  recurrent network that sequentially reads $\mathbf{x}_v(\cdot)$ vectors at each time index for all $v\in V$.
Standard GRU~\citep{chung2014empirical}.
For time-step $t\in\{1,\ldots,T\}$, let $\mathbf{y}_v(t) \in \mathbb{R}^d$ be the $d$-dimensional cell state, $\mathbf{i}_v(t)\in\mathbb{R}^d$ be the cell input and $\mathbf{f}_v(t) \in (0,1)^d$  be the forgetting gate, for each vertex $v\in V$.
Each time-step a fresh input $\mathbf{i}_v(t)$ is computed based on the current states $\mathbf{x}_u(t)$ of $v$'s neighbors in $G$.
The cell state is updated as a convex combination of the freshly computed inputs $\mathbf{i}_v(t)$ and the previous cell state $\mathbf{y}_v(t-1)$, where the weighting is done according to a forgetting value $\mathbf{f}_v(t)$ that is also computed based on the current vertex states.
The update equations for the input vector, forgetting value and cell state are chosen as follows:
\begin{align}
\mathbf{i}_v(t+1) &= \mathrm{relu} ( \mathbf{W}_4 \sum_{u\in\Gamma(v)} \mathbf{x}_v(t) + \mathbf{w}_5 c_v(t) + \mathbf{b}_6 ) \notag \\
\mathbf{f}_v(t+1) &= \mathrm{sigmoid} ( \mathbf{W}_7 \sum_{u\in V} \mathbf{x}_u(t) + \mathbf{b}_8 ) \notag \\
\mathbf{y}_v(t+1) &= \mathbf{f}_v(t+1) \odot \mathbf{i}_v(t+1) + (\mathbf{1}-\mathbf{f}_v(t+1)) \odot \mathbf{y}_v(t),
\end{align}
where $\mathbf{W}_4, \mathbf{W}_7 \in \mathbb{R}^{d\times d}$ and $\mathbf{w}_5, \mathbf{b}_6, \mathbf{b}_8 \in \mathbb{R}^d$ are trainable parameters,  $t=0,1,\ldots,T-1$, and $\mathbf{1}$ denotes the $d$-dimensional all-ones vector, and $\odot$ is element-wise multiplication. $\mathbf{y}_v(0)$ is initialized to all-zeros for every $v\in V$.
The cell state at the final time-step $\mathbf{y}_v(T), v\in V$ is the desired vector summary of the \algo~sequence.

\section{Evaluation Details} \label{apx: eval}

\subsection{Heuristics compared}
\label{sec: heuristics}
We compare \gtos~against: 

(1) {\em Structure2Vec}~\citep{dai2017learning}, a spatial GCNN with depth of 5. \\
(2) {\em GraphSAGE}~\citep{hamilton2017inductive} using (a) GCN, (b) mean and (c) pool aggregators, with the depth restricted to 2 in each case. \\ 
(3) {\em WL kernel NN}~\citep{lei2017deriving}, a neural architecture that embeds the WL graph kernel, with a depth of 3 and width of 4 (see~\citet{lei2017deriving} for details). \\
(4) {\em WL kernel embedding}, in which the feature map corresponding to WL subtree kernel of the subgraph in a 5-hop neighborhood around each vertex is used as its vertex embedding~\citep{shervashidze2011weisfeiler}.
Since we test on large graphs, instead of using a learned label lookup dictionary we use a standard SHA hash function for label shortening at each step. 
In each of the above models, the outputs of the last layer are fed to a $Q$-learning network, and trained the same way as \gtos. \\
(5) {\em Greedy algorithms.}
We consider greedy heuristics~\citep{williamson2011design} for each of MVC, MC and MIS. \\
(6) {\em List heuristic.} A fast list-based algorithm proposed recently in~\citet{shimizu2016fast} for MVC and MIS. \\
(7) {\em Matching heuristic.} A $2$-approximation algorithm for MVC~\citep{williamson2011design}.


\subsection{Adversarial Training} \label{sec: adv training} 
\begin{figure*}[t]
  \centerline{
     \subfigure[]{\includegraphics[height=25mm]{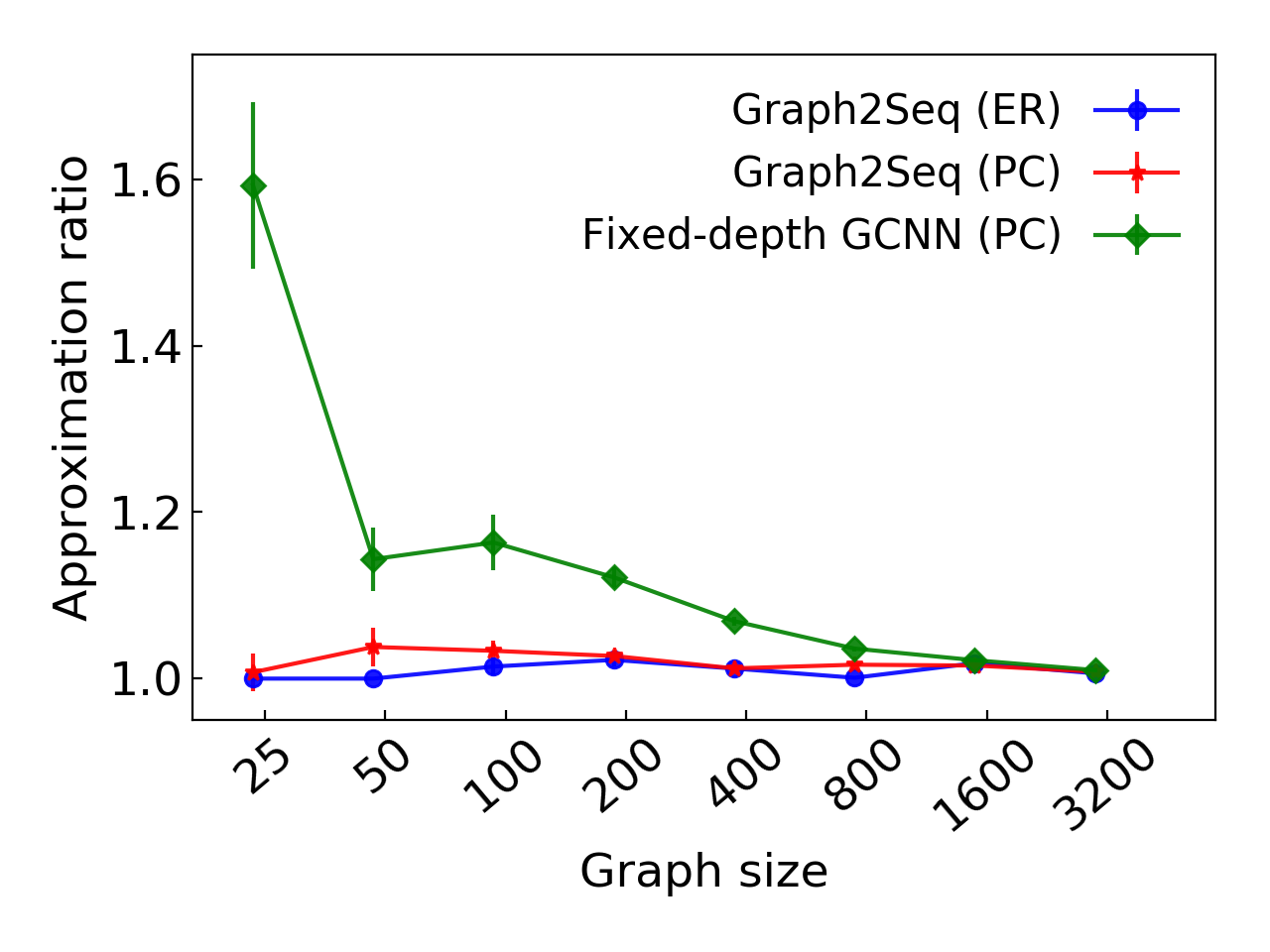}
    \label{fig: adv erdos}}  
     \subfigure[]{\includegraphics[height=25mm]{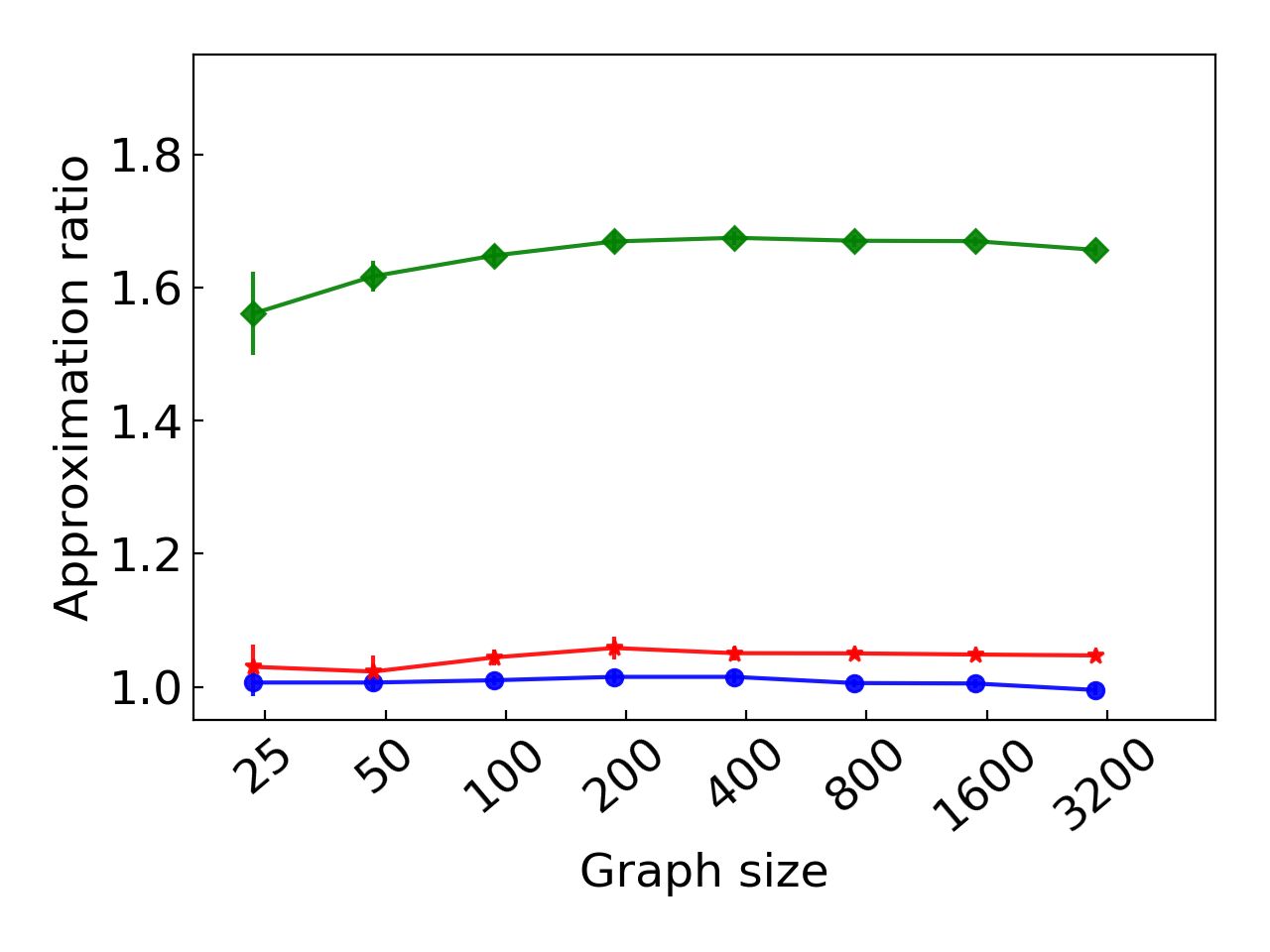}
    \label{fig: adv regular}}
     \subfigure[]{\includegraphics[height=25mm]{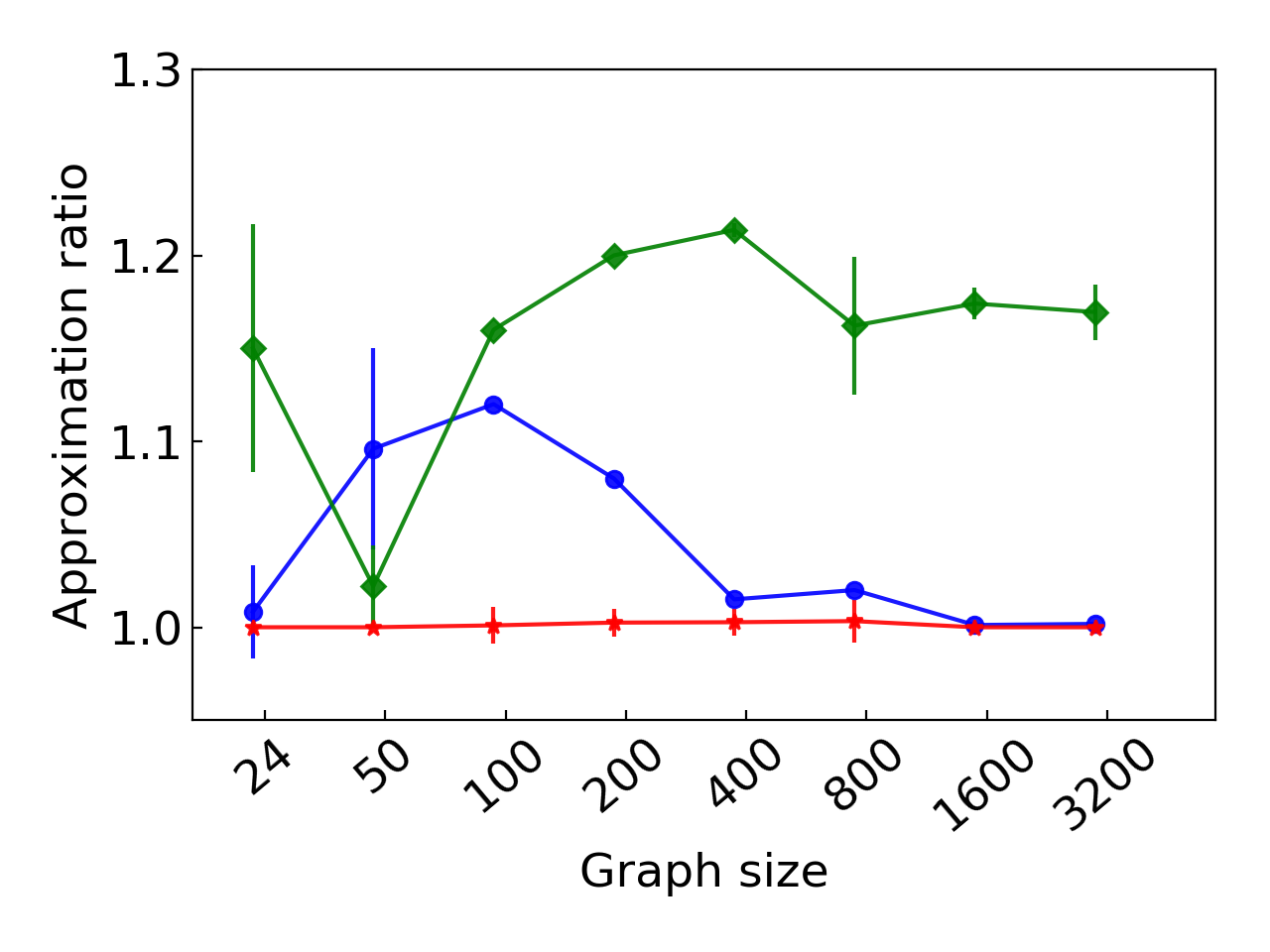}
     \label{fig: adv bipartite}}
     \subfigure[]{\includegraphics[height=25mm]{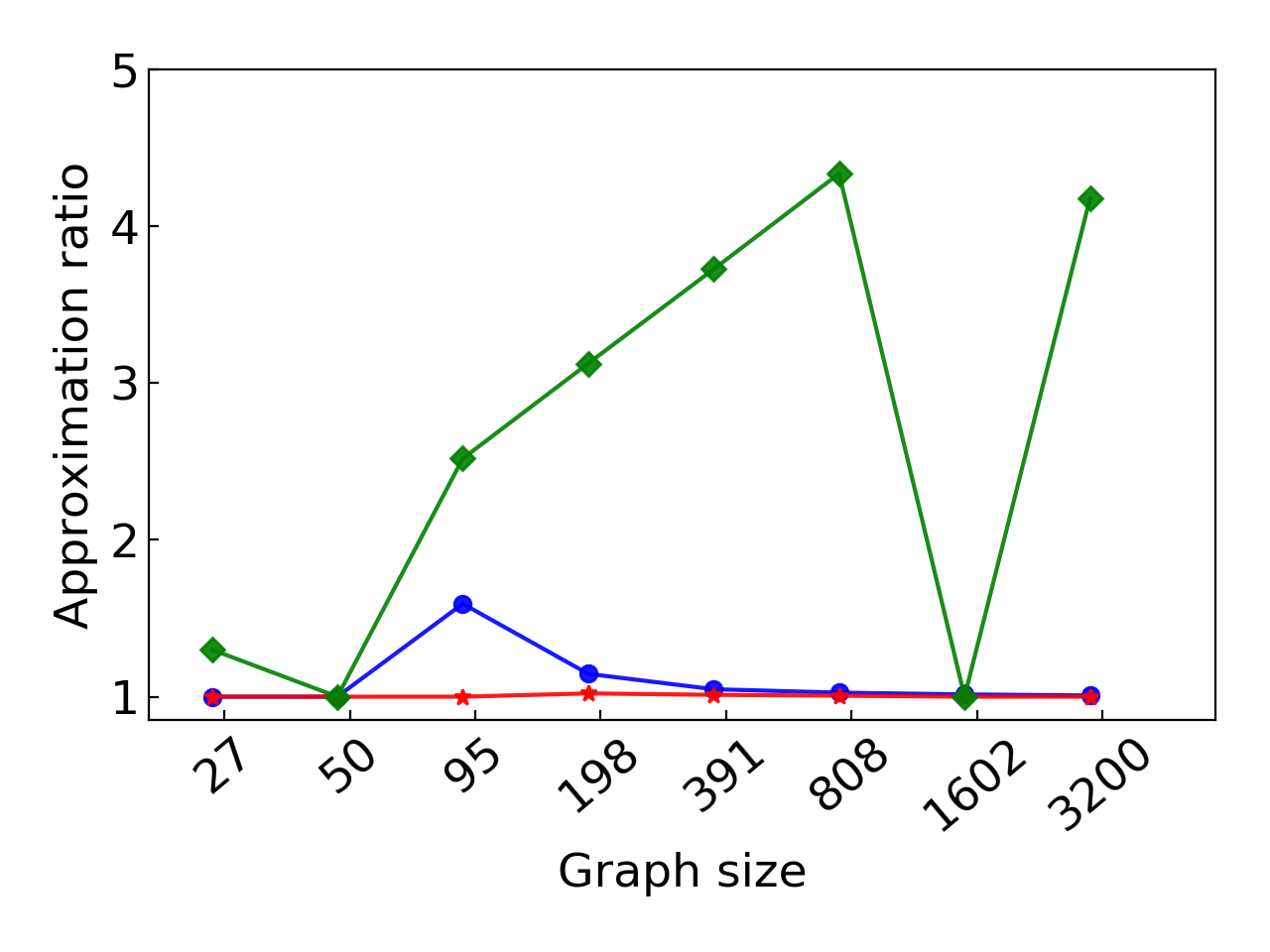}
     \label{fig: adv greedy}}}
    \caption{\small Minimum vertex cover in (a) random Erdos-Renyi graphs, (b) random regular graphs, (c) random bipartite graphs, (d) greedy example, under Erdos-Renyi graph and adversarial graph training strategies.}
    \label{fig: adv training}
\end{figure*}

So far we have seen the generalization capabilities of a \gtos~model trained on small Erdos-Renyi graphs.
In this section we ask the question: is even better generalization possible by training on a different graph family? 
The answer is in the affirmative. 
We show that by training on {\em planted vertex-cover} graph examples---a class of `hard' instances for MVC---we can realize further generalizations. 
A planted-cover example is a graph, in which a small graph is embedded (`planted') within a larger graph such that the vertices of the planted graph constitute the optimal minimum vertex cover.
Figure~\ref{fig: adv training} shows the result of testing \gtos~models trained under both Erdos-Renyi and planted vertex cover graphs. 
While both models show good scalability in Erdos-Renyi and regular graphs, on bipartite graphs and worst-case graphs the model trained on planted-cover graphs shows even stronger consistency by staying 1\% within optimal.

\subsection{Geometry of Encoding and Semantics of \algo} \label{sec: vc geometry and sem}

\begin{figure*}[th]
  \centerline{\subfigure[]{\includegraphics[height=35mm]{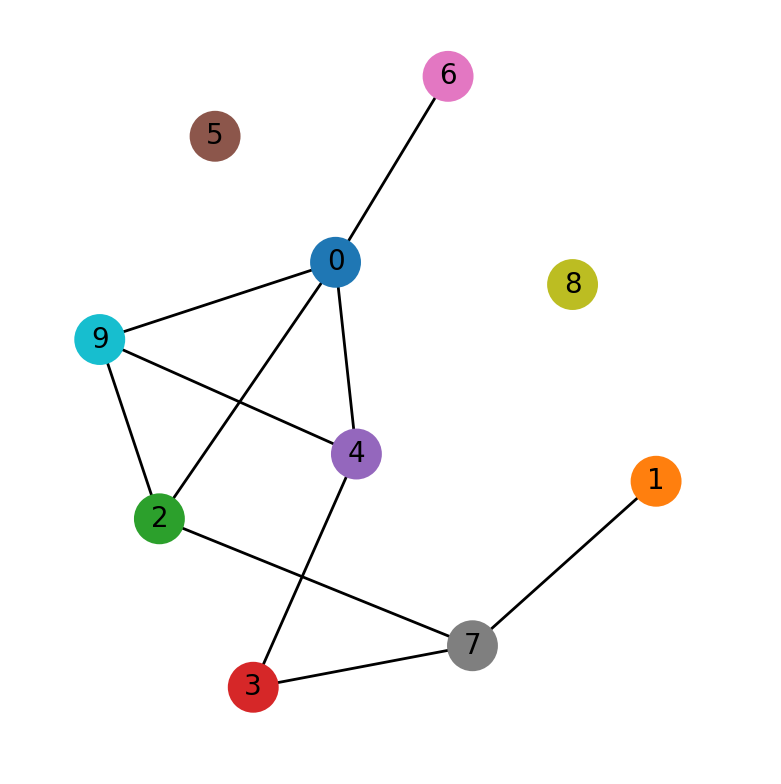}
     \label{fig: er_graph}}
     \subfigure[]{\includegraphics[height=40mm]{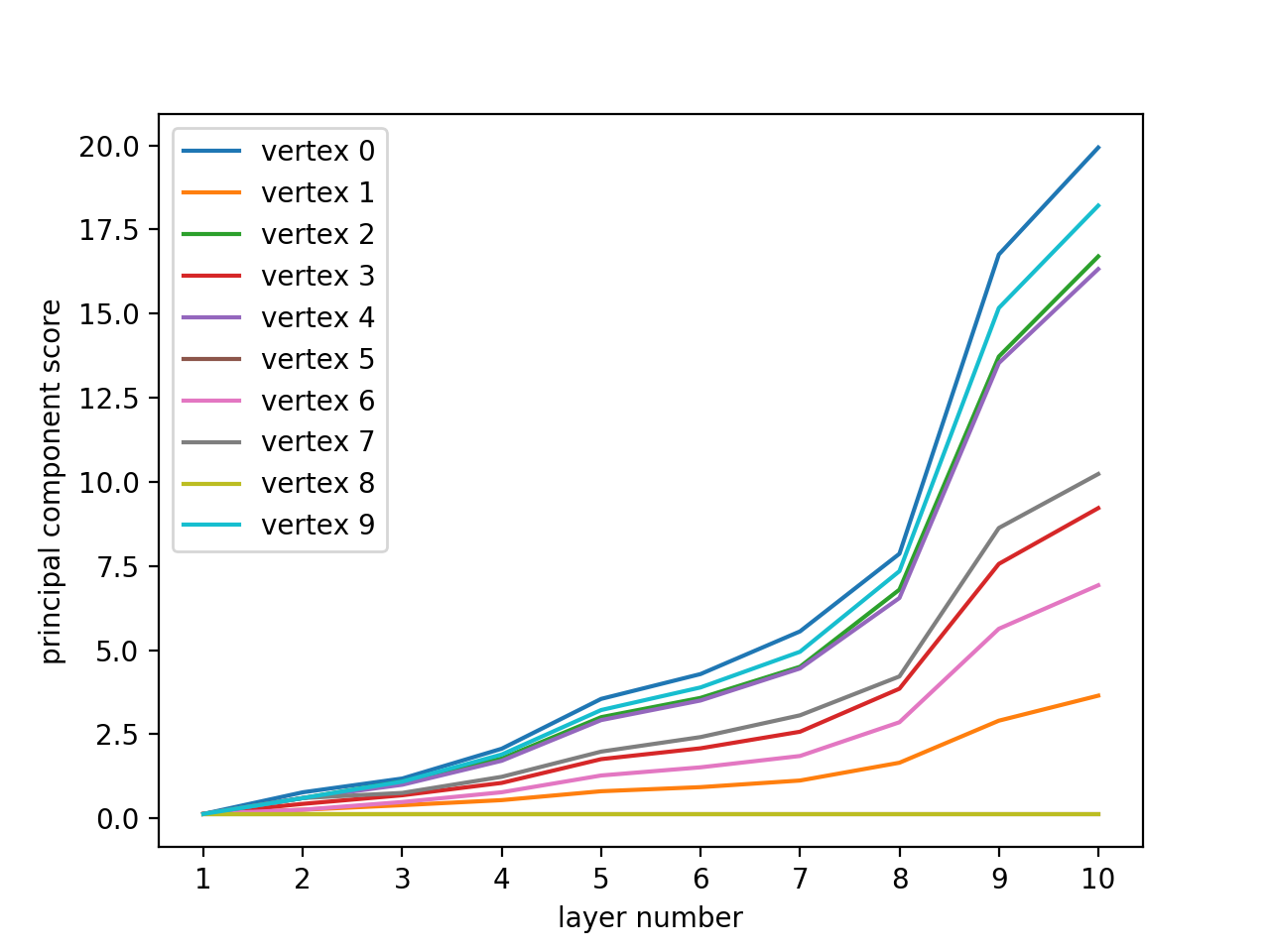}
     \label{fig: er_pca}}
     \subfigure[]{\includegraphics[height=40mm]{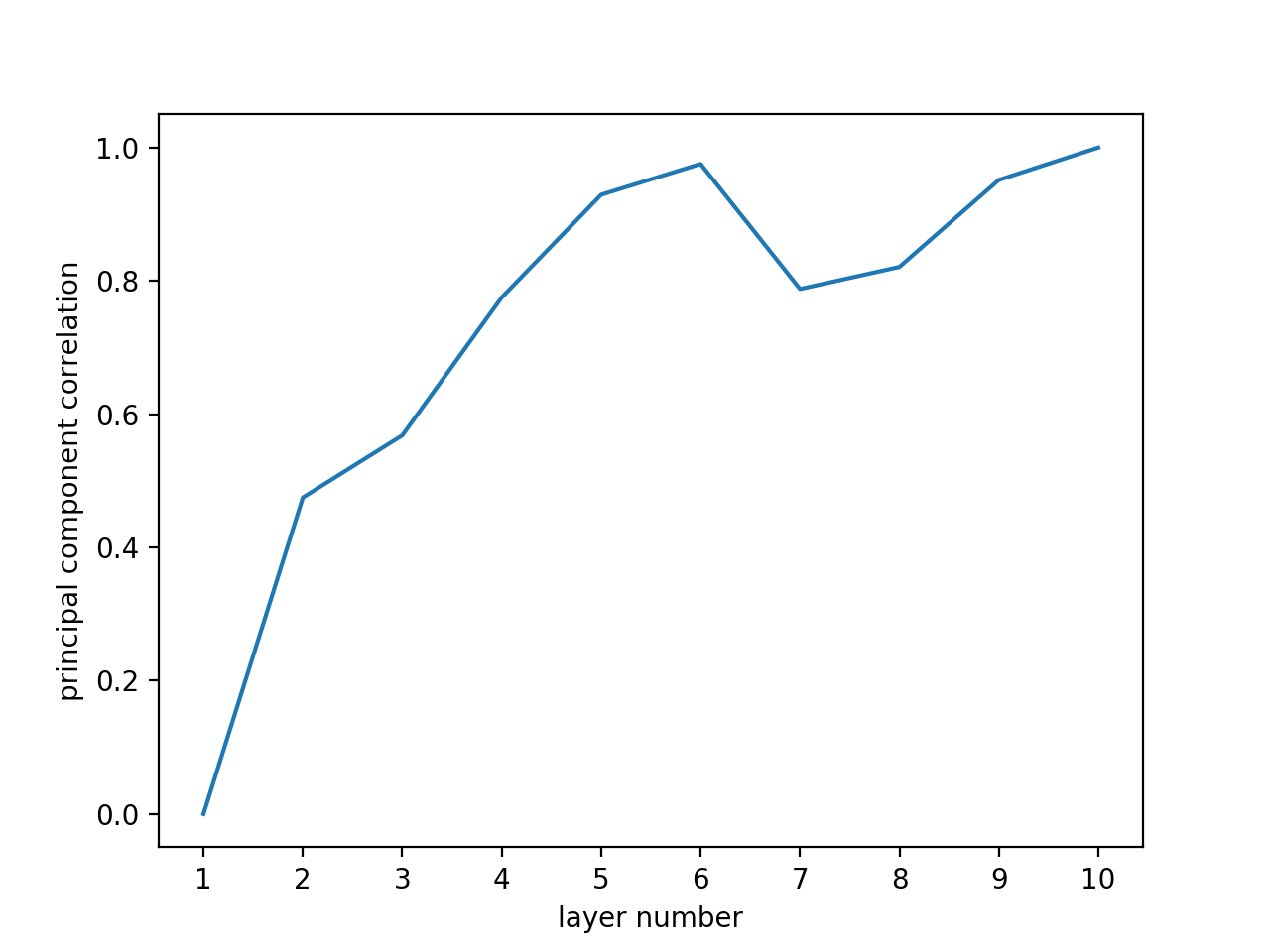}
    \label{fig: er_pca_inn_prod}}}
    \caption{(a) Erdos-Renyi graph of size 10 considered in Figures (b) and (c), (b) vertex-wise principal component scores at each layer, and (c) projection of the principal direction at each iteration on the principal direction of iteration 10. These experiments are performed on our trained model.}
    \label{fig: pca_trained}
\end{figure*}

Towards an understanding of what aspect of solving the MVC is learnt by \algo, we conduct empirical studies on the dynamics of the state vectors as well as  present techniques and semantic interpretations of \algo. 

In the first set of experiments, we investigate the vertex state vector sequence.
We consider graphs of size up to 50 and of types discussed in Section~\ref{sec: evals}.
For each fixed graph, we observe the vertex state $\mathbf{x}(\cdot)$ (Equation~\ref{eq: basic filter}) evolution to a depth of 10 layers.

{\bf (1) Dimension collapse.}
As in the random parameter case, we observe that on an average more than 8 of the 16 dimensions of the vertex state become zeroed out after 4 or 5 layers.

{\bf (2) Principal components' alignment.}
The principal component direction of the vertex state vectors at each layer converges.
Fig.~\ref{fig: er_pca_inn_prod} shows this effect for the graph shown in Fig.~\ref{fig: er_graph}.
We plot the absolute value of the inner product between the principal component direction at each layer and the principal component direction at layer 10.

{\bf (3) Principal component scores and local connectivity.}
The component of the vertex state vectors along the principal direction roughly correlate to how well the vertex is connected to the rest of the graph.
We demonstrate this again for the graph shown in Fig.~\ref{fig: er_graph}, in Fig~\ref{fig: er_pca}.


{\bf (4) Optimal depth}.
 We study the effect of depth on approximation quality on the  four graph types being tested (with size 50); we plot the vertex cover quality as returned by \algo as we vary the number of layers up to 25.
Fig.~\ref{fig: layerwise} plots the results of this experiment, where there is no convergence behavior but nevertheless  apparent that different graphs work optimally at different layer values.
While the optimal layer value is 4 or 5 for  random bipartite and random regular graphs,  the worst case greedy example requires 15 rounds.
This experiment underscores the importance of having a {\em flexible} number of layers is better than a fixed number; this is only enabled by the time-series nature of \algo~and is inherently missed by the fixed-depth GCNN representations in the literature.


{\bf (5) $Q$-function semantics.}
Recall that the $Q$-function of~\eqref{eq: Q func est} comprises of two terms.
The first term, denoted by $Q_1$,  is the same for all the vertices and includes a sum of all the $\mathbf{y}(\cdot)$ vectors. The second term, denoted by $Q_2(v)$ depends on the $\mathbf{y}(\cdot)$ vector for the vertex being considered. In this experiment we plot these two values at the very first layer of the learning algorithm (on a planted vertex cover graph of size 15, same type as in the training set) and make the following observations: (a) the values of $Q_1$ and $Q_2(\cdot)$ are close to being integers.
$Q_1$ has a value that is one less than the negative of the minimum vertex cover size.
(b) For a vertex $v$, $Q_2(v)$ is binary valued from the set $\{0, 1\}$.
$Q_2(v)$ is one, if vertex $v$ is part of an optimum vertex cover, and zero otherwise.
Thus the neural network, in principle, computes the complete set of vertices in the optimum cover at the very first round itself.

\begin{figure*}[t]
  \centerline{\subfigure[]{\includegraphics[height=46mm]{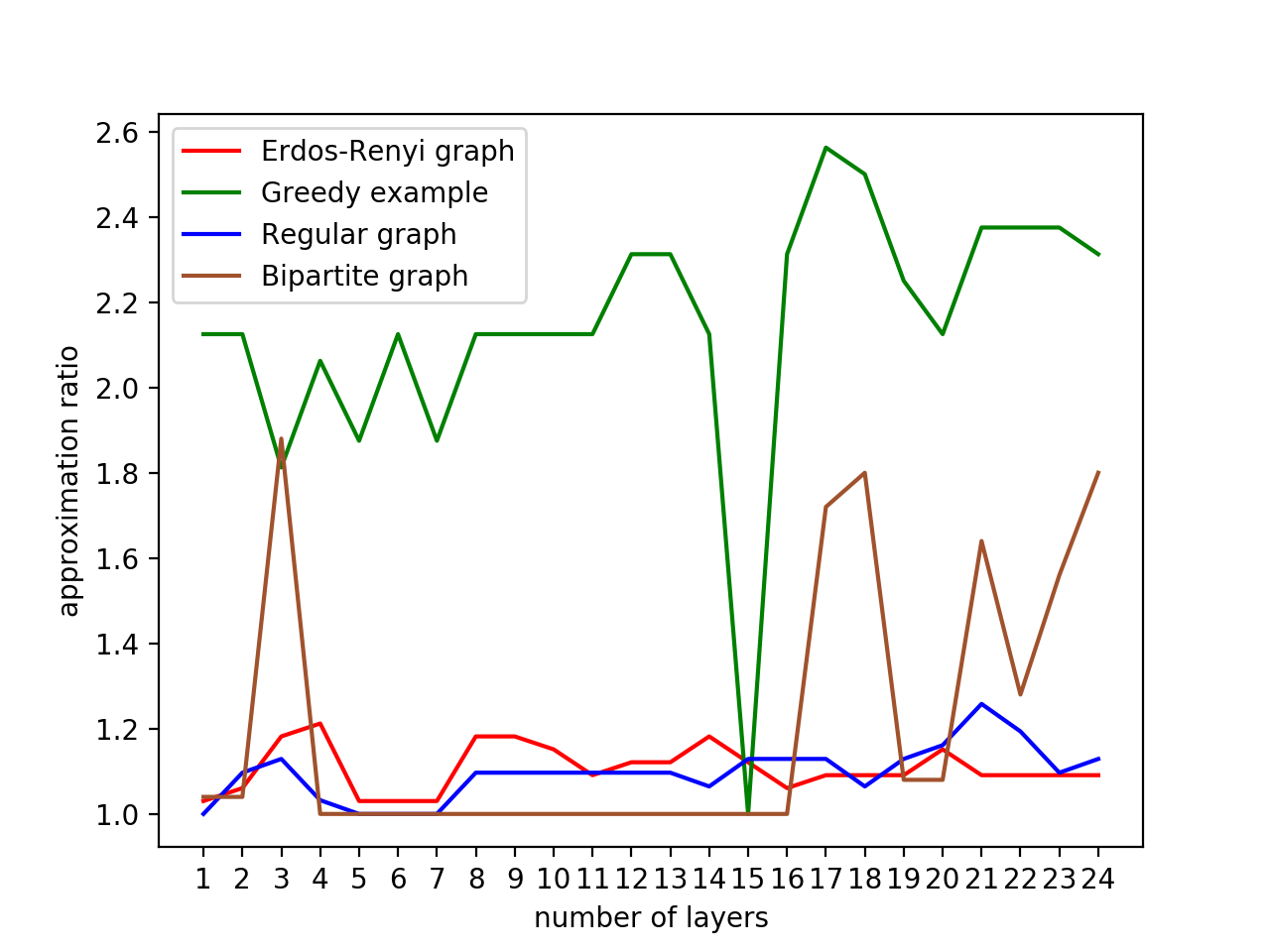}
     \label{fig: layerwise}}
     \subfigure[]{\includegraphics[height=40mm]{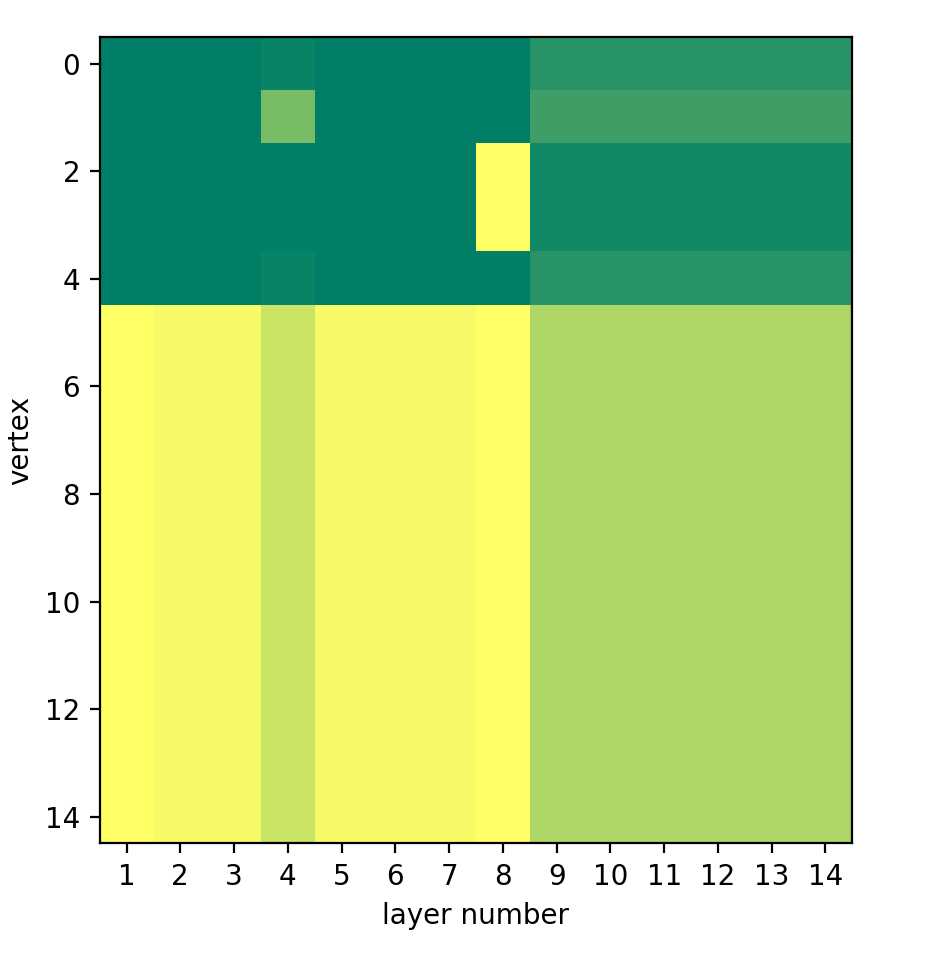}
     \label{fig: q_val_ours_pc}}
     \subfigure[]{\includegraphics[height=40mm]{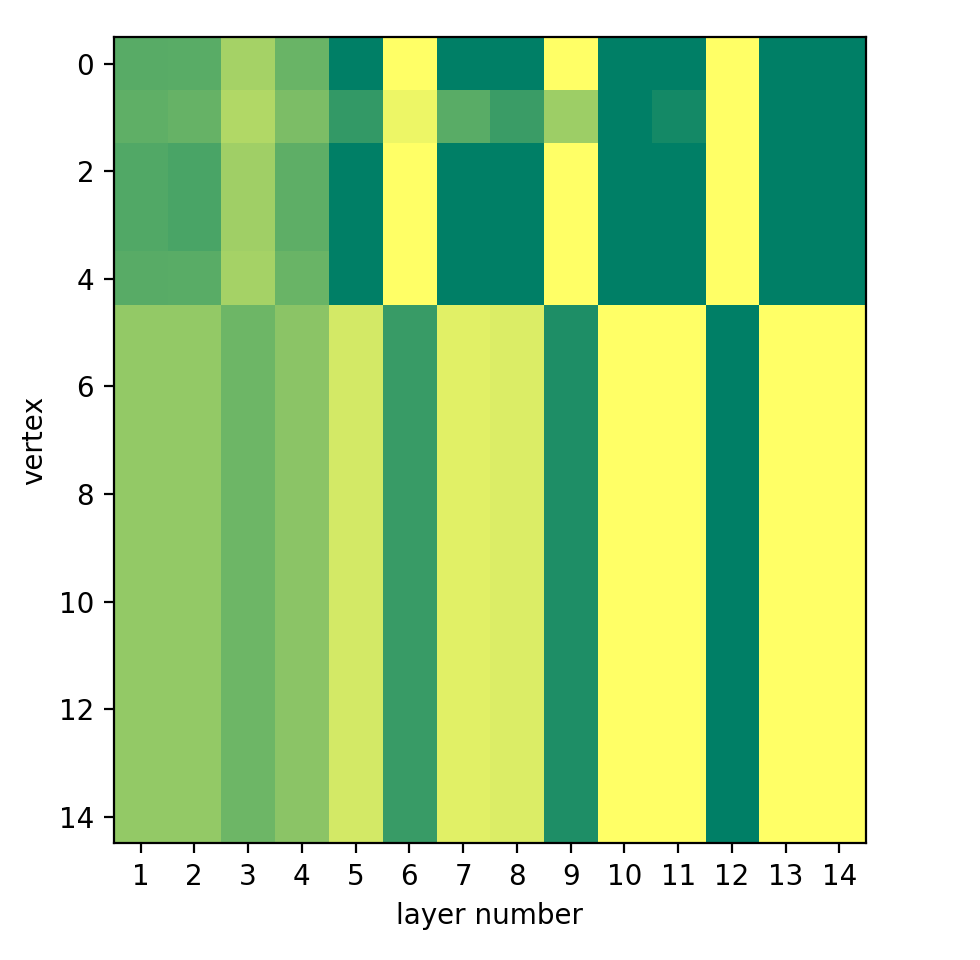}
    \label{fig: q_val_lsong_pc}}}
    \caption{(a) Approximation ratio of \algo~with varying number of layers, (b) $\mathbf{y}(\cdot)$ vectors of \algo~in the intermediate layers seen using the $Q$-function, (c) $\mathbf{x}(\cdot)$ vectors of the fixed-depth model seen using the $Q$-function. Figure (b) and (c) are on planted vertex cover graph with optimum cover of vertices $\{0,1,2,3,4\}$. }
    \label{fig: layerwise q_val}
\end{figure*}

{\bf (6) Visualizing the learning dynamics}.  The above observations suggests to `visualize' how our learning algorithm proceeds in each layer of the evolution using the lens of the value of
  $Q_2(\cdot)$. 
In this  experiment, we consider size-15 planted vertex cover graphs on (i) \algo, and (ii) the fixed-depth GCNN  trained on planted vertex cover graphs.
Fig.~\ref{fig: q_val_ours_pc} and~\ref{fig: q_val_lsong_pc} show the results of this experiment.
The planted vertex cover graph considered for these figures has an optimal vertex cover comprising vertices $\{0,1,2,3,4\}$.
We center (subtract mean) the $Q_2(\cdot)$ values at each layer, and threshold them to create the visualization.
A dark green color signifies the vertex has a high $Q_2(\cdot)$ value, while the yellow means a low $Q_2(\cdot)$ value.
We can see that in \algo~the heuristic is able to compute the optimal cover, and moreover this answer does not change with more rounds.
 The fixed depth GCNN has a non-convergent answer which oscillates between  a complementary set of vertices. Take away message:  having an upper LSTM layer in the learning network is critical to identify when an optimal solution is reached in the evolution, and ``latch on" to it.


\end{document}